\documentclass{article}

\PassOptionsToPackage{numbers, sort & compress}{natbib}
\usepackage[final]{neurips_2025}

\usepackage[utf8]{inputenc} %
\usepackage[T1]{fontenc}    %
\usepackage{hyperref}       %
\usepackage{url}            %
\usepackage{booktabs}       %
\usepackage{amsfonts}       %
\usepackage{nicefrac}       %
\usepackage{microtype}      %
\usepackage{xcolor}         %
\usepackage{amsmath}
\usepackage{amssymb}
\usepackage{amsthm}

\usepackage{mathtools}
\usepackage{mathrsfs}
\usepackage{bm}

\usepackage{macros}

\usepackage{listings}
\usepackage{xcolor}

\lstset{
  basicstyle=\ttfamily\small,
  frame=single,
  backgroundcolor=\color{gray!10},
  breaklines=true,
  breakindent=0pt,       %
  postbreak=\mbox{\space},  %
  columns=fullflexible,
  keepspaces=true,
  xleftmargin=0.5em,
  xrightmargin=0.5em
}

\title{Rethinking Fine-Tuning when Scaling Test-Time Compute: Limiting Confidence Improves Mathematical Reasoning}

\author{
   Feng Chen\thanks{Equal Contribution} \\
   Stanford University\\
   Stanford, CA 94305 \\
   \texttt{fengc@stanford.edu} \\
   \And
   Allan Raventós\textsuperscript{*} \\
   Stanford University\\
   Stanford, CA 94305 \\
   \texttt{aravento@stanford.edu} \\
    \And
   Nan Cheng \\
   University of Michigan\\
   Ann Arbor, MI 48109 \\
   \texttt{nancheng@umich.edu} \\
   \AND
   Surya Ganguli\\
   Stanford University\\
   Stanford, CA 94305 \\
   \texttt{sganguli@stanford.edu} \\
   \And
   Shaul Druckmann\\
   Stanford University\\
   Stanford, CA 94305 \\
   \texttt{shauld@stanford.edu} \\
}

\begin{document}

\maketitle

\begin{abstract}
  Recent progress in large language models (LLMs) highlights the power of scaling test-time compute to achieve strong performance on complex tasks, such as mathematical reasoning and code generation.
This raises a critical question: how should model training be modified to optimize performance under a subsequent test-time compute strategy and budget? 
To explore this, we focus on pass@N, a simple test-time strategy that searches for a correct answer in $N$ independent samples.
We show, surprisingly, that training with cross-entropy (CE) loss can be {\it misaligned} with pass@N in that pass@N accuracy {\it decreases} with longer training. 
We explain the origins of this misalignment in terms of model overconfidence induced by CE, and experimentally verify our prediction of overconfidence as an impediment to scaling test-time compute via pass@N.
Furthermore we suggest a principled, modified training loss that is better aligned to pass@N by limiting model confidence and rescuing pass@N test performance. 
Our algorithm demonstrates improved mathematical reasoning on MATH and MiniF2F benchmarks under several scenarios: (1) providing answers to math questions; and (2) proving theorems by searching over proof trees of varying shapes. 
Overall our work underscores the importance of co-designing two traditionally separate phases of LLM development: training-time protocols and test-time search and reasoning strategies.
\end{abstract}

\section{Introduction}
\label{sec:introduction}
Scaling test-time compute has been integral to unprecedented improvements in LLMs' reasoning skills for complex tasks such as math and coding.
Thus, test-time compute has emerged as a new dimension for improving LLMs, leading to a key tradeoff between allocating additional compute to inference versus pretraining~\cite{scalingtestimecompute}.
Diverse test-time strategies include Chain-of-Thought (CoT)~\cite{COT_Wei}, tree-of-thought~\cite{ToT_Yao}, self-consistency~\cite{self_consistency}, self-reflection~\cite{reflexion}, self-critique~\cite{selfcritique}, self-verification~\cite{selfverification} and Monte-Carlo tree search~\cite{MCTS}. These have shown great success in boosting model performance in the post-training phase or at inference time. More recently, OpenAI's O1 model~\cite{openaio1} and DeepSeek's R1 model~\cite{deepseekr1} have combined some of these strategies with reinforcement learning to generate high-quality reasoning traces for problems of various difficulty levels, demonstrating clear performance improvements as more test-time compute is allocated.

These successes fit into a broader paradigm in which a frontier model is first fine-tuned on a reasoning task with supervised fine-tuning (SFT)~\cite{COT_Wei,ouyang2022traininglanguagemodelsfollow,chung2022scalinginstructionfinetunedlanguagemodels}, and then a test-time algorithm is applied to model outputs or reasoning traces to improve performance~\cite{ToT_Yao,self_consistency,chen2021evaluatinglargelanguagemodels}.
Many test-time algorithms are independent of the fine-tuning process. As a result, the fine-tuning is agnostic to and thus decoupled from the test-time algorithm~\cite{chow2024inference}.
However, for a given choice of test-time strategy and compute budget, it is not {\it a priori} clear which fine-tuning approach, including the loss objective, would be best aligned with the test-time strategy so as to maximize the test accuracy under the overall strategy.

Our work studies the problem of aligning fine-tuning with test-time algorithms. We consider what is perhaps the simplest setting, SFT  with CE loss and pass@N as the test-time strategy. This setting reveals a case of misalignment: standard SFT is not maximizing performance under pass@N. We believe that this misalignment presents itself in several combinations of fine-tuning/test-time approaches, motivating our thorough study in this paper. Our main contributions are,
\begin{itemize}
    \item We identify a misalignment between standard fine-tuning with CE loss and the pass@N coverage metric at test time (\cref{subsec:overfitting}).
    \item We develop and experimentally verify a framework that suggests this misalignment arises from overconfidence induced by training on CE loss (\cref{subsec:explainoverfitting,subsec:overconfidence}).
    \item We propose a loss function that directly optimizes the pass@N coverage metric, demonstrating consistent improvement over the CE loss objective and achieving superior accuracy frontiers on MATH and MiniF2F (\cref{subsec:DCO,subsec:dco-overconf,subsec:DCO-s}).
    \item We extend our algorithm to more complex test-time scenarios including searching over proof-trees of varying shapes to improve automated theorem proving, and answering math questions using Chain-of-Thought reasoning traces, demonstrating improved mathematical reasoning in both cases (\cref{subsec:DCO-s,subsec:cot-online}).
\end{itemize}

\section{Related Works}
\label{sec:relatedworks}
\textbf{Test-time compute and pass@N strategy.} 
A growing body of work explores how training and test-time strategies interact to shape model performance.
\citet{boardgamescaling} highlight a tradeoff between train and test-time compute in a controlled board game setting. \citet{largelanguagemonkey} identify an exponentiated power law between coverage and number of in-context samples, captured phenomenologically in \citet{levi2024simplemodelinferencescaling} and \citet{schaeffer2025largelanguagemonkeyspower}, while
\citet{scalingtestimecompute} explore compute-optimal strategies for scaling test-time inference. Our paper focuses primarily on the pass@N test-time strategy.
\citet{gui2024bonbonalignmentlargelanguage} show that, in the best-of-N setting, the sample distribution is well-aligned with the reward model, and alignment methods have been proposed to distill this distribution and reduce sampling costs~\cite{gui2024bonbonalignmentlargelanguage, sessa2024bondaligningllmsbestofn}.
However, their work focuses on improving the pass@1 performance from the Best-of-N policy, while we focus on directly improving the pass@N performance.
Closely related to our work, \citet{chow2024inference} derives a similar training objective for reinforcement learning (RL) to directly optimize for the best-of-N test-time strategy. \citet{li2024entropicdistributionmatchingsupervised} argue that fine-tuning with cross-entropy loss limits output diversity and propose a maximum entropy-based method to address this issue. Our work similarly identifies limitations of cross-entropy when scaling test-time compute, but focuses on the alignment of training and test-time strategies. In addition, \citet{yue2025doesreinforcementlearningreally} argue that fine-tuning with Group Relative Policy Optimization (GRPO) does not improve reasoning capabilities beyond the base model when evaluated with pass@N strategy. We will show that GRPO training may also lead to overconfidence in~\cref{fig3}.
\textbf{Post-training for mathematical reasoning.}
Various post-training techniques have been developed to improve mathematical reasoning in LLMs.
Instruction-tuning and reinforcement learning with human feedback boost model performance on math tasks~\cite{mathinstruct,prm8k,ormprm}, while continued training on domain-specific math or code data also boosts downstream reasoning~\cite{minerva,azerbayev2024llemma,QwenMath2.5,deepseekmath,internlmmath}.
Rejection-sampling~\cite{star} and self-improvement~\cite{rstar} methods augment training data for SFT.
Recent approaches~\cite{openaio1, deepseekr1} have incorporated RL to achieve exceptional reasoning capabilities.
GRPO~\cite{deepseekmath, deepseekr1} is a newly proposed RL algorithm demonstrating strong improvements in mathematical reasoning. Several recent studies have further addressed response length and problem difficulty biases~\cite{liu2025understandingr1zeroliketrainingcritical}, as well as conducted extensive analyses of accuracy and response length across a wide variety of base models~\cite{liu2025understandingr1zeroliketrainingcritical,zeng2025simplerlzooinvestigatingtamingzero}.
While our primary focus is SFT, our loss function can be applied to other settings that train under CE loss, such as continual training, instruction-tuning, and data augmentation. %

\textbf{Data pruning and hard example mining.} 
The loss function we derive below can be viewed from the lens of data pruning and hard example mining.
Data selection is used to curate high-quality datasets for pretraining~\cite{datacuration}, where \citet{datapruning} show that pruning easy samples can improve loss scaling as a function of dataset size. \citet{LIMA,ye2025limoreasoning} demonstrate that, in SFT, even small datasets can yield strong alignment and reasoning performance.
Hard example mining focuses on identifying and emphasizing challenging samples to improve model performance~\cite{hardexamplemining}.
In the domain of mathematical reasoning, \citet{tong2024dartmath} find a difficulty imbalance in rejection-sampled datasets and show that more extensive training on difficult samples improves model performance.

\section{Problem setup}
\label{sec:problemsetup}

Given a vocabulary set $W$, we consider a dataset $\mathcal{D}=\{(x^{(i)},y^{(i)})\}_{i=1}^M$, where $x^{(i)} \in W^{n_i}$ is a prompt, $y^{(i)} \in W^{m_i}$ is its ground-truth completion, and $n_i$ and $m_i$ are the prompt and completion lengths. 
In the context of math, $x^{(i)}$ is the problem statement and $y^{(i)}$ is its solution.
To model the conditional distribution $p(y^{(i)}|x^{(i)})$ we use an autoregressive transformer model~\cite{transformer}, %
which is traditionally trained by minimizing the cross-entropy loss
\begin{equation}
\label{eq:CELOSS}
    \mathcal{L_{\text{CE}}}=-\mathop{\mathbb{E}}_{(x,y)\sim\mathcal{D}} \log \hat{p}(y | x)%
\end{equation}
where $\hat{p}$ denotes the model's distribution.

To use and evaluate the model at test time, we assume the existence of an efficient oracle verifier $V$ which takes as input an $(x,y)$ pair and returns $V(x,y)=1$ if $y$ is a correct completion of $x$ and 0 otherwise.
Practical examples of verifiers include compilers or pre-defined unit tests for coding problems, or automatic proof checkers in mathematical theorem proving. 
In such applications, a simple method, known as pass@N, for trading test-time compute for accuracy involves sampling $N$ completions from $\hat p$ given the test prompt $x$ and applying the verifier $V$ to all of them to search for a correct solution.  The probability of a correct answer is then no longer the probability that $1$ completion is correct, but rather the probability that {\it at least one} of $N$ is correct.  This probability, for a dataset $\mathcal D$, is given by the pass@N coverage metric 

\begin{equation}\label{eq:optimalstrategy}   \mathcal{C}_\mathcal{D}^N=\mathop{\mathbb{E}}_{x\sim\mathcal{D},y_i\stackrel{\text{i.i.d.}}{\sim}\hat{p}(\cdot|x)}\mathbb{P}(\exists j\in[N]\,s.\,t.\,  V(x,y_j)=1).
\end{equation}

Minimizing the CE loss in \cref{eq:CELOSS} is equivalent to maximizing the pass@$1$ metric $\mathcal{C}_\mathcal{D}^1$ on a training set $\mathcal D$.  But if we scale up test-time compute so that the pass@N metric $\mathcal{C}_\mathcal{D}^N$ on a test set $\mathcal D$ for $N \gg 1$ is the relevant performance metric, is $\mathcal{L}_\mathrm{CE}$ still a good training loss, or can we do better?

\section{Misalignment between CE loss and pass@N}
\label{sec:misalignment}
\subsection{The CE loss induces overfitting for pass@N }
\label{subsec:overfitting}

To understand the impact of training with CE loss on pass@N test performance, we fine-tune Llama-3-8B-base~\cite{llama3} on the MATH~\cite{hendrycksMATH} dataset. We start from the base model rather than Llama-3-8B-Instruct to avoid potential leakage of the MATH dataset into Llama-3-8B-Instruct through post-training. We follow \citet{prm8k} and use $12,000$ problems for training and the remaining $500$ for testing.  Here we train the model to provide a direct answer without a reasoning trace. We will discuss training with CoT in \cref{subsec:cot-online}. We confirm that the results presented in this section also hold for a larger model, Llama-3-70B-base, and direct readers to~\cref{sec:supp:70B} for additional results.

\begin{table}[b]
\caption{Pass@N coverage on the MATH test set for a Llama-3-8B-base model fine-tuned with CE loss on direct answers from the MATH training set. While pass@1 improves with continued training, for large $N$, surprisingly, pass@N decreases with number of training epochs.}
\label{table:misalignment}
\begin{center}

\begin{tabular}{lcccr}
\toprule
 & pass@1 & pass@16 & pass@256 & pass@4k \\
\midrule
Epoch 1 &4.4\% & 30.0\% & \textbf{65.2\%} & \textbf{82.5\%} \\
Epoch 2 &5.3\% & \textbf{31.4\%} & 64.5\% & 80.0\%\\
Epoch 3 &6.5\% & 28.7\% & 54.5\% & 79.2\%\\
Epoch 4 &\textbf{7.4\%} & 22.9\% & 44.5\% & 63.0\%\\
\bottomrule
\end{tabular}
\end{center}
\end{table}

\cref{table:misalignment} reveals that the pass@N performance $\mathcal{C}_\mathcal{D}^N$ on a test set monotonically increases with the number of training epochs {\it only} for $N=1$, when minimizing CE loss is equivalent to maximizing $\mathcal{C}_\mathcal{D}^1$. However, for $N\geq16$, minimizing CE loss during training does {\it not} monotonically increase $\mathcal{C}_\mathcal{D}^N$ at test; indeed for $N\geq256$, pass@N test performance, remarkably, monotonically {\it decreases} with the number of training epochs, despite the fact that pass@1 performance monotonically {\it increases}. This effectively corresponds to a novel type of overfitting, in which test performance degrades over training, likely due to a mismatch between the test time (pass@N) and training time (pass@1) strategies.

\subsection{Overfitting, confidence, and explore-exploit tradeoff}
\label{subsec:explainoverfitting}

What are the origins of this overfitting? First we show that in the simple case of a {\it single} problem $x$, overfitting cannot occur. Let $\hat{p}(x)$ denote the probability assigned by the model to all correct answers for problem $x$. Then the pass@1 coverage is $\mathcal{C}^{1}=\hat{p}(x)$ while the pass@N coverage is $\mathcal{C}^{N}=1-(1-\hat{p}(x))^N=1-(1-\mathcal{C}^{1})^N$.  This formula for $\mathcal{C}^{N}$ in {\it the single problem} case obeys:
\begin{lemma}
\label{lemma:monotonicity}
    $\forall N,\,N' > 0$, $\mathcal{C}^{N}$ is monotonic in $\mathcal{C}^{N'}$.
\end{lemma}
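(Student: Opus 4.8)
The plan is to exploit the explicit single-problem formula $\mathcal{C}^N = 1 - (1 - \mathcal{C}^1)^N$ and reduce the claimed monotonicity to an elementary fact about the map $t \mapsto 1 - (1-t)^N$ on the interval $[0,1]$. First I would introduce the auxiliary function $f_N \colon [0,1] \to [0,1]$ defined by $f_N(t) = 1 - (1-t)^N$, so that $\mathcal{C}^N = f_N(\mathcal{C}^1)$ for every $N > 0$; in particular $\mathcal{C}^1 = f_1(\mathcal{C}^1)$ trivially, and $f_N$ is a bijection of $[0,1]$ onto itself. The key observation is that each $f_N$ is \emph{strictly increasing} on $[0,1]$: its derivative is $f_N'(t) = N(1-t)^{N-1} \ge 0$, which is strictly positive on $[0,1)$, and $f_N$ is continuous on all of $[0,1]$, so it is a strictly increasing continuous bijection with a strictly increasing continuous inverse $f_N^{-1}$.

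Given $N, N' > 0$, I would then write $\mathcal{C}^N$ as a function of $\mathcal{C}^{N'}$ by composing: since $\mathcal{C}^1 = f_{N'}^{-1}(\mathcal{C}^{N'})$, we have $\mathcal{C}^N = f_N\bigl(f_{N'}^{-1}(\mathcal{C}^{N'})\bigr) = (f_N \circ f_{N'}^{-1})(\mathcal{C}^{N'})$. As a composition of two strictly increasing functions, $f_N \circ f_{N'}^{-1}$ is strictly increasing, which is exactly the statement that $\mathcal{C}^N$ is monotonic in $\mathcal{C}^{N'}$. (One can even give the composite in closed form: $f_N \circ f_{N'}^{-1}(u) = 1 - (1-u)^{N/N'}$, which is manifestly increasing in $u \in [0,1]$, giving an alternative one-line verification.)

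I don't anticipate a genuine obstacle here — the statement is essentially a restatement of the fact that raising $1 - t$ to a positive power is order-reversing in $t$ and order-preserving under further composition. The only points requiring minor care are (i) making sure the degenerate endpoints $\mathcal{C}^1 \in \{0,1\}$ are handled, which they are since $f_N$ is a bijection of the closed interval; and (ii) being explicit about what "monotonic in" means when both quantities are determined by the single free parameter $\mathcal{C}^1 = \hat{p}(x) \in [0,1]$ — I would state it as: there is a monotone (indeed strictly increasing) function $g = f_N \circ f_{N'}^{-1}$ with $\mathcal{C}^N = g(\mathcal{C}^{N'})$. With that framing the proof is three or four lines.
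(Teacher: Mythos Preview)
Your proposal is correct. The paper does not supply a separate proof of this lemma at all --- it simply states the single-problem formula $\mathcal{C}^{N}=1-(1-\mathcal{C}^{1})^{N}$ and treats the monotonicity as an immediate consequence --- so your argument via $f_N(t)=1-(1-t)^N$ and the composition $f_N\circ f_{N'}^{-1}$ is exactly the explicit version of the paper's implicit reasoning.
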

Thus increasing pass@N$'$ coverage implies increasing pass@N coverage for any $N$ and $N'$. When $N'=1$, this implies minimizing CE loss maximizes pass@N coverage.

However, \cref{lemma:monotonicity} can fail when there is more than one problem. To understand this in a simple setting, consider a test set with two problems $x_1$ and $x_2$ with unique correct answers $y_1$ and $y_2$. Consider two models. The first model assigns probabilities $\hat{p}_1(y_1|x_1)=1$ and $\hat{p}_1(y_2|x_2)=0$.  If we think of the probability a model assigns to an answer as its {\it confidence} in that answer, this model is highly confident and correct for problem $x_1$, but highly confident and {\it wrong} for $x_2$.  Its pass@1 coverage is thus 50\%.  Moreover, since it always gets $x_1$ right and $x_2$ wrong, its pass@N coverage remains at 50\%.  In contrast, consider a second model which assigns probabilities $\hat{p}_2(y_1|x_1)= \hat{p}_2(y_2|x_2)=0.1$. This model has high confidence (0.9) but unfortunately on incorrect answers. Therefore its pass@1 accuracy is only 10\%. However, it is willing to explore or hedge by placing some low confidence ($0.1$) on the other answer, which happens to be correct.  Thus if this model samples $N$ times, the pass@N coverage increases with $N$, eventually approaching 100\% as $N\rightarrow\infty$. Thus the first model outperforms the second in terms of pass@1 but not pass@N for large $N$.  This indicates a tradeoff amongst policies: those that do better on pass@1 may not do better at pass@N.  Of course the best one can do is be confident {\it and} correct, corresponding to the optimal model with $\hat{p}^*(y_1|x_1) = \hat{p}^*(y_2|x_2) = 1$.  However, this toy example reveals that if one cannot guarantee correctness, it can be beneficial to limit confidence and explore more solutions.

To demonstrate more generally the existence of tradeoffs between confident exploitation of a few answers versus unconfident exploration of many answers, as a function of the number of passes $N$, we prove two lemmas. 
To set up these lemmas, given any problem, let $\hat{p}_i$ denote the model probability or confidence assigned to answer $i$, and assume answers are sorted from highest to lowest confidence so that $\hat{p}_i \geq \hat{p}_{i+1} , \forall\,i\geq 1$. Thus $\hat{p}_1$ is the model's {\it maximal confidence} across all answers. Moreover, let $p_{i}$ be the probability across all the problems that the $i^{\text{th}}$ ranked answer is {\it actually} correct.
    Assume the model policy is approximately well calibrated so that higher confidence implies higher or equal probability of being correct, i.e. $p_{i}\geq p_{i+1}, \forall\,i\geq 1$. 
We empirically verify that the model trained with CE loss on the MATH dataset approximately satisfies this assumption (\cref{fig:well-calibrated}). Indeed, optimal policies maximizing pass@N coverage in \cref{eq:optimalstrategy} are approximately well calibrated (See~\cref{lemma:calibrated}). $p_1$ is then the model's {\it maximal accuracy} across all answers. We prove: 

\begin{lemma}[Upper bound on max confidence]\label{lemma:upperboundfora1}
Assume a max accuracy $p_{1}$, and assume $\sum_{i=1}^{k}p_{i}\geq 1-\epsilon$ for some $0<\epsilon<1-p_1$. Then optimal policies maximizing pass@N coverage in \cref{eq:optimalstrategy}, subject to the above accuracy and calibration constraints, have a confidence upper bound $u(N,p_1,\epsilon,k)$, which is a non-increasing function of $N$.
\end{lemma}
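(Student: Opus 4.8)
The plan is to first turn the claim into an explicit finite-dimensional optimization. Let $\hat p_1\ge\hat p_2\ge\cdots$ be the sorted model confidences (so $\hat p$ lies in the probability simplex) and $p_1\ge p_2\ge\cdots$ the sorted true-correctness probabilities with $\sum_i p_i=1$. Averaging the single-problem identity $\mathcal C^{N}=1-(1-\hat p)^N$ over the (random) rank $R$ of the correct answer, where $\mathbb P(R=i)=p_i$, gives
\[
  \mathcal C_{\mathcal D}^{N}(\hat p)\;=\;1-\sum_{i\ge 1}p_i\,(1-\hat p_i)^N .
\]
For $N\ge 2$ this is a strictly concave function of $\hat p$, and a short exchange argument shows its maximizer over the simplex is automatically sorted (moving confidence onto higher-$p_i$ answers only decreases $\sum_i p_i(1-\hat p_i)^N$), so the constrained maximizer $\hat p^\star$ coincides with the unconstrained-over-the-simplex one. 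The object we must bound is $\hat p_1^\star$; since the statement involves only $p_1,\epsilon,k$, the bound $u$ is necessarily the supremum of $\hat p_1^\star$ over all admissible profiles $p$ (first entry $p_1$, sorted, with $\sum_{i=1}^k p_i\ge 1-\epsilon$).

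The bound I would extract from two one-sided estimates. On one side, discarding the nonnegative tail terms gives $\mathcal C_{\mathcal D}^{N}(\hat p^\star)\le 1-p_1(1-\hat p_1^\star)^N$. On the other side, $\hat p^\star$ must do at least as well as any admissible ``hedging'' comparison policy; a convenient admissible choice spreads its mass over the top $k$ answers, which together with $\sum_{i=1}^k p_i\ge 1-\epsilon$ yields a lower bound on $\mathcal C_{\mathcal D}^{N}(\hat p^\star)$ that tends to $1$ as $N\to\infty$. Chaining the two and solving for $\hat p_1^\star$ produces $\hat p_1^\star\le u(N,p_1,\epsilon,k)$ of the shape $1-(\psi(N)/p_1)^{1/N}$ with $\psi(N)\downarrow 0$. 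Equivalently, and more precisely, a KKT / water-filling analysis gives $\hat p_i^\star=\bigl(1-(c_N/p_i)^{1/(N-1)}\bigr)_+$ with $c_N$ fixed by $\sum_i\hat p_i^\star=1$, so that $\hat p_1^\star=1-(c_N/p_1)^{1/(N-1)}$ and the constraint $\sum_{i=1}^k p_i\ge 1-\epsilon$ forces a lower bound on $c_N$ and hence the desired $u$.

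The remaining, substantive step is that $u$ is non-increasing in $N$. The heuristic is explore/exploit: the marginal coverage value of the first probability mass placed on an answer, $\partial_{\hat p_i}[1-(1-\hat p_i)^N]=N(1-\hat p_i)^{N-1}$, equals $N$ at $\hat p_i=0$ and grows with $N$ relative to its value at larger $\hat p_i$, so the optimizer is pushed to peel mass off the top answer as $N$ grows; hence $\hat p_1^\star$ for each fixed $p$, and therefore its supremum $u$, decreases. I would make this rigorous either by an exchange / single-crossing comparison of the optimizers at $N$ and $N+1$ using the water-filling form above, or by differentiating the explicit $u$ in $N$. I expect this monotonicity to be the main obstacle: one must control how the water-filling level $c_N$ evolves with $N$ (equivalently, choose the comparison policy sharply enough that the resulting $u$ is genuinely non-increasing rather than merely eventually decreasing), whereas the coverage-formula derivation and the constraint manipulations are routine.
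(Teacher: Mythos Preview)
Your coverage formula and the water-filling form $\hat p_i^\star=(1-(c_N/p_i)^{1/(N-1)})_+$ are correct and match what the paper uses. The divergence starts at the outer maximization over admissible accuracy profiles $p$. You propose to bypass it by sandwiching: upper-bound the optimal coverage by $1-p_1(1-\hat p_1^\star)^N$ and lower-bound it by a uniform ``hedge over the top $k$'' policy. That chain yields $\hat p_1^\star\le 1-\bigl((\epsilon+(1-\epsilon)(1-1/k)^N)/p_1\bigr)^{1/N}$, but this is vacuous whenever $\epsilon\ge p_1$, which the hypothesis $\epsilon<1-p_1$ does not rule out; so the comparison-policy route, as stated, does not actually deliver a finite $u(N,p_1,\epsilon,k)$ in general. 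The alternative you mention---``the constraint $\sum_{i\le k}p_i\ge1-\epsilon$ forces a lower bound on $c_N$''---is exactly where the work lies, and you do not say how. The paper does not sandwich; it solves the outer max exactly by a case analysis on the number $j$ of active coordinates in the water-filling solution, showing that the maximizing $p$ profile has the special structure $p_1=\cdots=p_{j-1}$ and $p_{j+1}=\cdots=p_k$, from which $u$ is computed in closed form case by case.

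The monotonicity in $N$, which you correctly flag as the main obstacle, is also handled differently. Your plan (exchange argument on the optimizers at $N$ and $N+1$, or differentiate the explicit $u$) is plausible but not carried out, and the second option presupposes you already have the sharp $u$, which your sandwiching does not give. The paper's mechanism is discrete: it proves that the active-set size $j=j(N)$ is non-decreasing in $N$, and since the case-$j$ bound satisfies $1/j\le u<1/(j-1)$, increasing $j$ pushes $u$ down. So the missing idea is precisely this case decomposition by active-set size; without it you neither get a universally finite bound nor a handle on monotonicity.
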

We proved this upper bound as well as the proof in~\cref{app:proofs}. This upper bound is monotonically decreasing in $N$ (\cref{fig:theorem}), implying that at large $N$, an optimal policy for pass@N must limit its max confidence to be low, thereby favoring exploration and discouraging exploitation. Furthermore, we prove:  
\begin{lemma}[Lower bound on max confidence]
    \label{lemma:lowerboundfora1}
    Assume top two accuracies $p_{1}>p_{2}$. Then optimal policies maximizing pass@N coverage in \cref{eq:optimalstrategy}, subject to above accuracy and calibration constraints, must have max confidence obeying 
    \begin{equation*}
        \hat{p}_{1}^{*}\geq 1-\frac{(1-p_{1}+p_{2})p_{1}^{\frac{1}{1-N}}p_{2}^{\frac{1}{N-1}}}{1-p_{1}+p_{2}+p_{1}^{\frac{1}{1-N}}p_{2}^{\frac{N}{N-1}}}.
    \end{equation*}
\end{lemma}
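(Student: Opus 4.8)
The plan is to characterize the optimal policy through the first‑order (KKT) conditions of the underlying concave program, and then extract the stated closed‑form lower bound by an algebraic comparison.

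\emph{Setting up the optimization.} In the ranked‑answer abstraction used to state the lemma, a policy is a sorted confidence profile $\hat p_1\geq\hat p_2\geq\cdots\geq 0$ with $\sum_i\hat p_i=1$, the calibrated accuracies $p_i$ are fixed (with $p_1>p_2\geq p_3\geq\cdots$ and $\sum_i p_i=1$), and the coverage of \cref{eq:optimalstrategy} specializes to $\mathcal C^N=\sum_i p_i\bigl(1-(1-\hat p_i)^N\bigr)$. Each summand is concave in $\hat p_i$ for $N>1$, so this is a concave maximization over a convex subset of the simplex, and the KKT conditions are necessary and sufficient for optimality.

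\emph{KKT structure.} Since $p_1>p_2$, any optimal policy must have $\hat p_1^*>\hat p_2^*$ (a policy that tied the top two confidences could be strictly improved by tilting mass toward the higher‑accuracy answer $1$), so the monotonicity constraint between answers $1$ and $2$ is slack. Hence transferring an infinitesimal amount of mass between them yields the equalization
\[
p_1(1-\hat p_1^*)^{N-1}=p_2(1-\hat p_2^*)^{N-1},\qquad\text{i.e.}\qquad 1-\hat p_2^*=\Bigl(\tfrac{p_1}{p_2}\Bigr)^{\frac{1}{N-1}}(1-\hat p_1^*).
\]
If answer $2$ receives zero mass, then answer $1$ receives all of it, $\hat p_1^*=1$, and the claimed bound (whose right‑hand side is strictly below $1$) holds trivially; so I would assume $\hat p_2^*>0$. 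For the remaining answers the KKT conditions give $\hat p_i^*=\max\bigl(0,\,1-(c/p_i)^{1/(N-1)}\bigr)$ with $c:=p_1(1-\hat p_1^*)^{N-1}$.

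\emph{Reducing to a tail bound and concluding.} Writing $A:=\sum_{i\geq 3}\hat p_i^*$, the normalization $\hat p_1^*+\hat p_2^*+A=1$ together with the equalization expresses $\hat p_1^*$ as a strictly decreasing function of $A$; after substituting and simplifying the powers of $p_1,p_2$ (using $p_1^{1/(1-N)}p_2^{1/(N-1)}=(p_2/p_1)^{1/(N-1)}$ and $p_1^{1/(1-N)}p_2^{N/(N-1)}=p_2(p_2/p_1)^{1/(N-1)}$), the claimed inequality becomes equivalent to an explicit upper bound on $A$ in terms of $p_1,p_2,N$. To obtain that bound I would control the tail confidence using (i) calibration, so every tail accuracy satisfies $p_i\leq p_2$; (ii) the accuracy budget $\sum_{i\geq 3}p_i=1-p_1-p_2$; and (iii) concavity of $p\mapsto 1-(c/p)^{1/(N-1)}$, which bounds how much confidence the tail can absorb per unit of accuracy. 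Maximizing $A$ subject to these constraints and plugging the worst case back into the normalization produces exactly the stated expression.

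\emph{Main obstacle.} I expect the crux to be this tail estimate: the KKT conditions alone give only an implicit description of the tail, and turning it into a closed‑form bound on $A$ that is simultaneously tight enough to yield the clean expression and valid uniformly over all admissible tail accuracy profiles — and across the sub‑cases where the structure of the optimal tail changes (answer $2$ active versus inactive, and $\hat p_2^*$ above versus below $1/N$) — is the delicate part. Everything else (concavity, the two‑answer equalization, and the final algebraic rearrangement) is routine.
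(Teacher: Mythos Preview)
Your KKT setup and the equalization $p_1(1-\hat p_1^*)^{N-1}=p_2(1-\hat p_2^*)^{N-1}$ are fine, but the route you propose is substantially different from the paper's, and the step you yourself flag as the crux is exactly where your tools do not close the argument.

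The concavity you invoke concerns $p\mapsto 1-(c/p)^{1/(N-1)}$ \emph{at fixed} $c$, yet in your formulation $c=p_1(1-\hat p_1^*)^{N-1}$ is itself determined by normalization and therefore moves with the tail profile. So the Jensen-type step you sketch does not control the coupled problem; to make it work you would need a monotonicity statement (raising any tail accuracy $p_i$ toward $p_2$ can only lower $\hat p_1^*$), and once you have that you have essentially arrived at the paper's argument anyway.

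The paper bypasses the tail estimate altogether by a direct comparison. It replaces $p_2,p_3,\ldots,p_R$ by $s$ copies of $p_2$, with $s$ the least integer satisfying $p_1+sp_2\ge 1$; since every tail accuracy has been raised to its cap $p_2$, the optimal first-coordinate confidence in this surrogate problem is a lower bound for the original $\hat p_1^*$. In the surrogate all non-first accuracies are equal, so Jensen forces the optimum to equalize $\hat p_2=\cdots=\hat p_{s+1}=(1-\hat p_1)/s$, collapsing the problem to a one-variable minimization whose first-order condition gives
\[
\hat p_1^*\ \ge\ 1-\frac{s\,p_1^{1/(1-N)}p_2^{1/(N-1)}}{s+p_1^{1/(1-N)}p_2^{1/(N-1)}}.
\]
Finally $p_1+(s-1)p_2<1$ gives $s<(1-p_1+p_2)/p_2$, and since the right-hand side above is decreasing in $s$, substituting this bound yields the stated inequality. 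No sub-case analysis (answer $2$ active/inactive, thresholds at $1/N$) is needed.
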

This lower bound is a monotonically decreasing function of $N$ (\cref{fig:theorem}), implying that at small $N$, optimal policies must have high max confidence, thereby favoring exploitation and discouraging exploration. Note in the limit $N\to 1^{+}$, the lower bound is always $1$, recovering the intuitive result that the optimal policy for pass@1 is to place 100\% confidence on the highest accuracy answer.  

\subsection{Overconfidence prevents performance gains from scaling test-time compute}

\begin{figure*}[t]
\begin{center}
\centerline{\includegraphics[width=\textwidth]{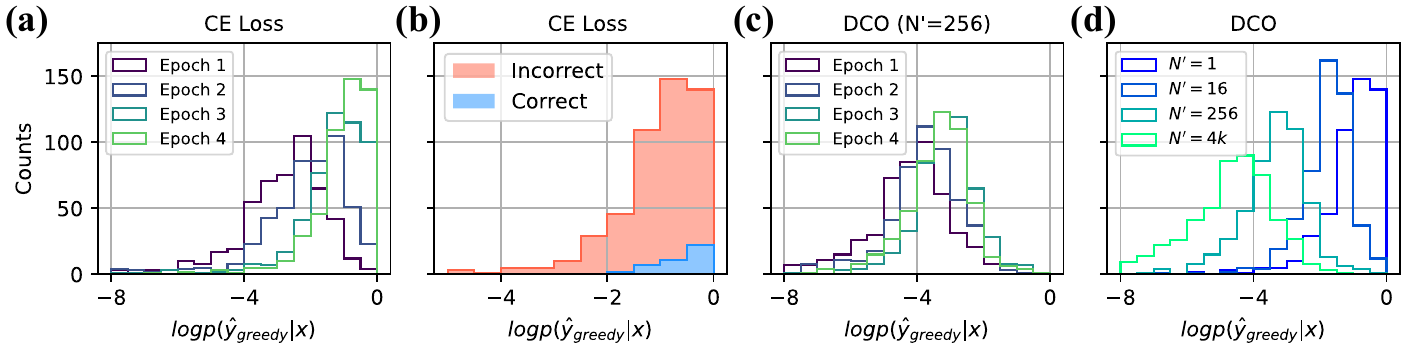}}
\caption{\textbf{A model trained with CE loss becomes overconfident in its greedy completions, which harms its pass@N coverage; our proposed DCO objective limits this overconfidence.} We fine-tune a Llama-3-8B base model on the MATH dataset to produce direct answers. $\hat{y}_\text{greedy}$ is the model's greedy completion when choosing the most likely token at each sampling step. \textbf{(a)} The model trained with CE loss assigns progressively larger confidences $\hat{p}(\hat{y}_\text{greedy}|x)$ to its greedy completions over the course of training. \textbf{(b)} At the end of training, only a small portion of the model's highly confident completions are correct. This will harm the model's pass@N performance when scaling up $N$. \textbf{(c)} Same as (a) but shown for the DCO loss with $N'=256$.  The model trained on DCO shows a much milder overconfidence effect.  \textbf{(d)} The confidence distribution of greedy completions after four epochs with DCO for various choices of $N'$. As $N'$ increases, the model's confidence in its greedy completion is more stringently limited, directly as a consequence of the overconfidence regularizer $F$.}
\label{fig:overconfidence}
\end{center}
\vskip -0.35in
\end{figure*}

\cref{lemma:upperboundfora1} suggests that at large $N$ it is beneficial to unconfidently explore by assigning low model confidence to many answers, while \cref{lemma:lowerboundfora1} suggests that at small $N$ it is beneficial to confidently exploit by assigning high model confidence to one or a few answers.
These lemmas make a prediction that could explain the empirical observation in \cref{table:misalignment}: namely, maximization of pass@1 makes the model overconfident, thereby preventing good performance on pass@N.

To test this theoretical prediction of model overconfidence, we estimated the max confidence of the model as $\hat{p}(y_\mathrm{greedy}|x)$ where $y_\mathrm{greedy}$ is the greedy completion to $x$ obtained by sampling autoregressively from the model $\hat{p}$, and at each step selecting the token with the highest probability. 
$\hat{p}(y_\mathrm{greedy}|x)$ approximates the max confidence $\hat{p}_1$ in \cref{lemma:upperboundfora1,lemma:lowerboundfora1}.
For the model fine-tuned on MATH with CE loss, we plot the distribution of $\hat{p}(y_\mathrm{greedy}|x)$ over the test set in \cref{fig:overconfidence} (a). This demonstrates the model becomes progressively more confident about its greedy completion over training, thereby confirming our theoretical prediction. In \cref{fig:overconfidence} (b) we see why this is a problem: only some of the model's highly confident answers are correct. Thus the model becomes overconfident {\it and} largely wrong. Scaling test-time compute cannot easily rescue such a model, as over multiple samples, it is likely to confidently provide the same wrong answers, explaining the origins of poor pass@N test performance. In~\cref{sec:supp:AIME}, we observe a similar trend of overconfidence on the out-of-distribution test set AIME24, further highlighting the generality of this failure mode.

\label{subsec:overconfidence}

\section{Direct Coverage Optimization}
\label{sec:NCO}
\subsection{A solution that naturally prevents overconfidence}
\label{subsec:DCO}
The misalignment between CE training loss and pass@N coverage suggests a simple solution: {\it directly optimize} pass@N coverage for each training example at training time, whenever the pass@N strategy is to be used at test-time. We thus propose the Direct Coverage Optimization (DCO) objective, $\mathcal{L}_{\text{DCO}}^N = \mathop{\mathbb{E}}_{(x,y)\sim\mathcal{D}}\ell^N_{\text{DCO}}(x,y)$, where  
\begin{align}
\ell^N_{\text{DCO}}(x,y) =-\log\left( 1-(1-\hat{p}(y|x))^N\right), \label{eq:DCO}
\end{align}
is the $-\log$ probability that the model produces $y$ at least once in $N$ samples given $x$.  Thus for each prompt $x$ and correct completion $y$ in the training set, we maximize the probability $y$ is found in $N$ passes.  
This loss naturally prevents model overconfidence, as can be seen via its gradient
\begin{equation}
    \nabla_\theta \ell_{\text{DCO}}^N(x,y) = F\left(N,\hat{p}(y|x)\right) \nabla_\theta \ell_{\text{CE}}(x,y)
    \label{eq:DCO-gradients}
\end{equation}
where $\ell_\mathrm{CE}$ is the standard CE loss on a single example, and $F(N,\hat{p}(y|x))=\frac{N (1-\hat{p}(y|x))^{N-1}\hat{p}(y|x)}{1-(1-\hat{p}(y|x))^N}$
is an extra overconfidence regularization factor that multiplies the standard CE gradient. Note that $F(N,\hat{p}(y|x))=1$ for $N=1$, so DCO reduces to CE for pass@1. Furthermore $F(N,\hat{p}(y|x))$ monotonically decreases in the model confidence $\hat{p}(y|x)$ (see \cref{fig:2} (b)). Thus gradients for examples $(x,y)$ on which the model is more confident are attenuated, and this attenuation is stronger for larger $N$. This justifies the interpretation of $F(N,\hat{p}(y|x))$ as a regularizer that prevents model overconfidence. Indeed, for large $N$, $F(N,\hat{p}(y|x))\approx 0$ for confidence $\hat{p}(y|x)\gtrsim 1/N$.
As soon as model confidence on an example exceeds $1/N$, its gradient becomes negligible. Thus interestingly, aligning training and test better through DCO naturally yields a simple emergent regularization of model overconfidence, which was itself identified as an impediment to performance gains through scaling test-time compute in \cref{fig:overconfidence} (a). 
Indeed, $F(N, \hat{p}(y|x))$ is smaller for easier examples (\cref{fig:data_dependency}, right), effectively regularizing their contribution. This regularization mitigates the potential detrimental effects of overconfidence from easy examples as discussed in \cref{subsec:easydata}. 
In practice, the introduction of $F$ can lead to some samples in a batch contributing minimally to the current gradient step. To maintain a stable effective batch size, we introduce a threshold $\epsilon$, and if for a given example, $(x,y)$, $F(N,\hat{p}(y|x))<\epsilon$, we replace the example with a new one.

\begin{figure*}[t]
\begin{center}
\centerline{\includegraphics[width=\textwidth]{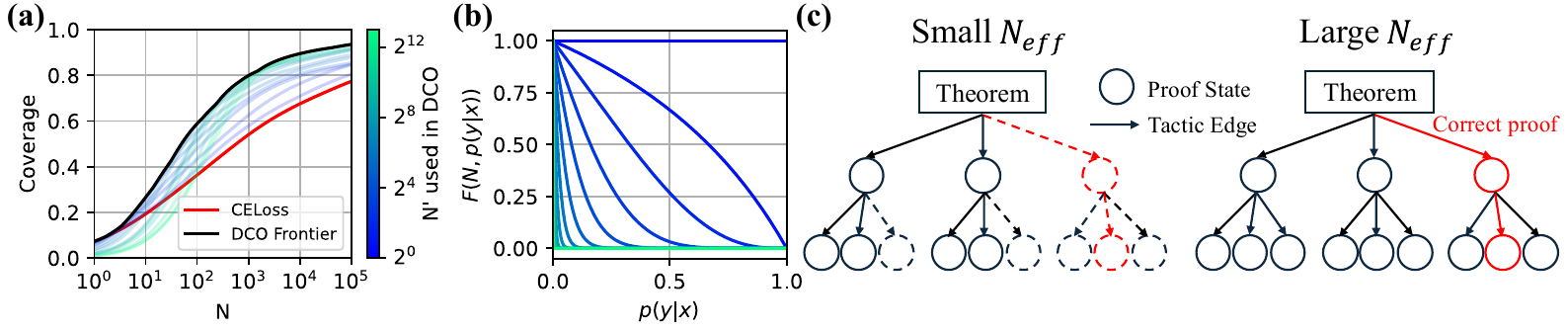}}
\caption{
\textbf{(a)} \textbf{DCO improves on CE loss for pass@N test coverage over a broad range of N and traces a Pareto-optimal frontier.} We fine-tune Llama-3-8B base models on MATH to produce direct answers: one with CE loss and others using $\mathcal{L}_\text{DCO}^{N'}$ for various $N'$ (color-coded). Each curve shows pass@N coverage for a \textit{single} fine-tuned model. Note that no $N'$ is optimal for all $N$. The black curve is a Pareto-optimal performance frontier traced by the max of coverage curves for DCO over all $N'$. (See~\cref{fig:DCOfrontier_256_highlight}, which highlights $N'=256$, and ~\cref{fig:DCOfrontier_70B.,fig:DCOfrontier_AIME} for Llama-3-70B and AIME24 results.) \textbf{(b)} \textbf{DCO limits overconfidence by attenuating gradients for examples on which the model is highly confident.} We plot the confidence regularization factor $F$ in \cref{eq:DCO-gradients}. For CE loss ($N=1$), $F=1$ regardless of confidence $\hat{p}(y|x)$. For $N > 1$, $F$ decreases with $\hat{p}(y|x)$ and drops to zero around $1/N$---once confidence on an example reaches $O(1/N)$, $F$ vanishes, preventing it from further increasing.
\textbf{(c)} \textbf{Inverse confidence $N_\text{eff}$ at training time controls search-tree exploration shapes at test-time.} We provide schematic proof search trees for models trained with DCO\textsuperscript{step} for small (left) and large (right) $N_\mathrm{eff}$. Circles are proof states; edges are tactics; solid edges are explored, while dashed edges are not. Small $N_\mathrm{eff}$ yields a narrow search tree that may miss correct proofs (red, dashed), while larger $N_\mathrm{eff}$ expands the tree to include the correct proof. However, if the tree becomes too wide ($N_\mathrm{eff}^k\gg N$ for a $k$-step proof), sampling the correct path becomes unlikely and pass@N coverage decreases as $\mathcal{C}^N\approx N/N_\mathrm{eff}^k$. Thus $N_\mathrm{eff}$, chosen at training time, is a powerful knob to control the tradeoff between exploitation and exploration at test time.
}
\label{fig:2}
\end{center}
\vskip -0.35in
\end{figure*}

\subsection{DCO can prevent overconfidence and rescue test-time scaling}
\label{subsec:dco-overconf}
We next test whether DCO can rescue test-time scaling by preventing model overconfidence. We first perform experiments on the MATH dataset in this section, and then in the next section we perform experiments on the LeanDojo automated theorem proving benchmark~\cite{leandojo}. We fine-tune Llama-3-8B base model for 4 epochs on the MATH training set using DCO for $N'=256$ and confirm that the model is much less confident in its greedy completion than when trained with CE loss after multiple epochs (compare \cref{fig:overconfidence} (a) and (c)).  Moreover, training with DCO at larger $N$ yields lower model greedy confidences at the end of training (\cref{fig:overconfidence} (d)), consistent with \cref{lemma:upperboundfora1}. 

We next assess pass@N test performance as a function of $N$ for models trained by minimizing $\mathcal{L}_\text{DCO}^{N'}$ for different values of $N'$ (\cref{fig:2} (a)).  
For any given $N$, there is an optimal $N'$ for the training loss $\mathcal{L}_\text{DCO}^{N'}$ that maximizes pass@N test coverage, yielding a Pareto optimal performance frontier (black curve) that is achieved when $N'$ is close to $N$. 
In particular the model trained with CE loss (equivalent to pass@1 maximization at training) performs poorly relative to the Pareto frontier at large N 
(red curve below black at large $N$). Conversely, models trained with DCO at large $N'$ perform poorly relative to the Pareto frontier for small $N$ (green curves below black at small $N$).  

Together these results indicate that the alignment of the training loss $\mathcal{L}_\text{DCO}^{N'}$ with the test time strategy pass@N, with $N'$ close to $N$, is crucial for obtaining Pareto optimal performance. 
Moreover, these results once again confirm the tradeoff between exploration and exploitation, with good performance using pass@N test-time strategies requiring high (low) exploration with low (high) confidence at large (small) $N$. 

\subsection{Improved theorem proving via ensembled tree search through a modified step-wise DCO} 
\label{subsec:DCO-s}
To further test our method in realistic settings with a verifier, we conduct experiments in theorem proving using an interactive proof assistant on the LeanDojo benchmark~\cite{leandojo} extracted from the math library of LEAN4~\cite{The_mathlib_Community_2020}. In this task, at each step $i$ of the proof, the model is prompted with the current proof state $x[i]$, and it outputs a proof tactic $y[i]$ with a trainable model probability $\hat{p}(y[i]|x[i])$.
The proof assistant takes the sampled tactic $y[i]$, verifies whether it is a valid tactic, and if so, returns the next proof state $x[i+1]$ after the tactic $y[i]$ is applied. 
At test time this interactive process between the model and the assistant continues until either: (1) it terminates in an invalid tactic; (2) hits a maximal allowed search depth set by computational constraints; or (3) terminates in a successful proof of the initial proof goal as verified by the proof assistant. 

In our experiments, we use LEAN4~\cite{lean4} as the formal proof language. We start from Qwen2.5-Math-1.5B~\cite{QwenMath2.5}, and fine-tune it on the LeanDojo benchmark~\cite{leandojo}. In contrast to solving math problems above where the model {\it first} autoregressively generates an entire answer $y$ according to $\hat{p}(y|x)$ and {\it then} the entire $y$ is verified for correctness, in theorem proving we {\it must} obtain correct tactics $y[i]$, as verified by the proof assistant, at {\it every} proof step $i$. A straightforward application of DCO fine-tuning in this setting then involves replacing the model confidence $\hat{p}(y|x)$ in ~\cref{eq:DCO} on a single answer $y$, with the model confidence on an {\it entire} successful, complete $k$-step proof $\hat{p}(y[0],...,y[k-1]|x[0])=\prod_{i=0}^{k-1}\hat{p}(y[i]|x[i])$. Because of the chain rule, the DCO gradient on a single proof has the same form as~\cref{eq:DCO-gradients} but with $F(N,\hat{p}(y|x))$ replaced with $F(N, \prod_{i=0}^{k-1}\hat{p}\left(y[i]| x[i]\right))$. 
We naively applied this DCO fine-tuning method with $N'=4$k and evaluated the resulting model, as well as a base model trained with CE loss, on MiniF2F, using pass@4k.
CE loss achieves a proof success rate of 37.4\%, while fine-tuning with DCO achieves a 38.7\% success rate. Thus, a naive application of DCO to theorem proving by matching the parameter $N'$ in DCO to the parameter $N$ in pass@N achieves only a modest improvement.  

In theorem proving, since {\it every single} intermediate proof step tactic $y[i]$ must be valid, it is natural to consider a {\it step-wise generalization} of DCO. For example, at each step $i$, the model chooses a tactic $y[i]$ with confidence $\hat {p}(y[i]| x[i])$. Our proposed step-wise DCO optimizes this single-step confidence according to \cref{eq:DCO-gradients} as before, except now the {\it step-wise} confidence regularizer is given by $F(N_\text{eff}, \hat{p}(y[i]| x[i]))$.  Here one can think of $N_\text{eff}$ as a step-wise DCO hyperparameter that controls the exploration width at every proof step.  In essence, the confidence regularizer prevents the confidence of any chosen tactic from becoming much higher than $1/N_\text{eff}$. Since the sum of the confidences over all tactics at each step must be $1$, this means that at test time, model exploration at each step corresponds to searching on a search tree in which each proof state $x[i]$ allows the exploration of approximately only $N_\text{eff}$ tactics.  Thus using $N_\text{eff}$ in step-wise DCO during fine-tuning selects the approximate branching factor $N_\text{eff}$ of the proof search tree at test time (see ~\cref{fig:2} (c)).

\begin{table}[t]
\caption{
Proof success rates on Mathlib and MiniF2F using pass@4k. DCO\textsuperscript{step} outperforms CE loss ($N_\mathrm{eff}=1$) on both datasets for well-chosen $N_\mathrm{eff}$. Ensembling models trained with different $N_\mathrm{eff}$ leverages complementary strengths, with substantial gains over CE at matched test-time compute.}
\label{table:minif2f}
\begin{center}
\resizebox{\columnwidth}{!}{
\begin{tabular}{cccccccc}
\toprule
$N_{\mathrm{eff}}$ & 1 (CE loss) & 4 & 8 & 16 & 32 & Ensemble of  & $N_\mathrm{eff}=1$ \\
 & & & & & & all $N_\mathrm{eff}$'s &(5x test compute)\\
\midrule
Mathlib  & 55.6\% & 56.4\% & 56.1\% & \textbf{56.5\%} & 55.8\% & \textbf{62.2\%} & 57.0\% \\
MiniF2F  & 37.4\% & 39.0\% & \textbf{39.5\%} & 37.0\% & 37.4\% & \textbf{43.6\%} & 39.5\% \\
\bottomrule
\end{tabular}}
\end{center}
\vskip -0.2in
\end{table}

In particular, a valid (partial) proof of length $k$ will have probability of order $N_\text{eff}^{-k}$. Thus small $N_\text{eff}$ limits the exploration at test time to a tree with small branching factor, but allows longer proofs with larger numbers of steps $k$ to have higher sampling probability. This limited width search strategy should work well at test time using pass@N as long as two conditions hold: %
(1) a successful proof of length $k$ is present in the search tree of branching factor $N_\text{eff}$; and (2) the $N$ in pass@N at test time is large enough that this proof of probability $O(N_\text{eff}^{-k})$ is selected with high probability in $N$ passes (which starts to occur as soon as $N \gtrsim N_\text{eff}^k$).  
Conversely, larger $N_\text{eff}$ allows for more exploration at test time using a wider search tree of larger branching factor, but it makes finding longer proofs with large $k$ harder, since the approximate success condition for pass@N of $N>N_\text{eff}^{k}$ is harder to satisfy at fixed $N$ and large $N_\text{eff}$ and $k$.  However, this wide exploration strategy can work well for short proofs with small $k$ if a successful short proof does not lie in the limited width search tree obtained at small $N_\text{eff}$, but does lie in a wider search tree at larger $N_\text{eff}$ (\cref{fig:2}, (c) right).

The mean and median lengths of the proofs in the training set are 4.2 and 2, respectively.  Thus we explore fine-tuning with the step-wise DCO algorithm, DCO\textsuperscript{step}, with $N_\text{eff}$ ranging from $1$ (CE loss) to $N_\text{eff}=32$. 
We evaluate pass@4k performance on the Mathlib and MiniF2F test sets and find that larger $N_\mathrm{eff}$ improves accuracy over the CE baseline (\cref{table:minif2f}).
Interestingly, the optimal $N_\text{eff}$ increases with more passes at test time (\cref{app:table:lean-mathlib}), similar to the Pareto frontier in~\cref{fig:2} (a).
Since different choices of $N_\text{eff}$ correspond to different search strategies at test time---larger $N_\text{eff}$ favors short proofs in wide trees, smaller $N_\text{eff}$ favors long proofs in narrow trees---
we hypothesized that ensembling these methods could significantly boost performance. This was indeed the case (ensemble entry in \cref{table:minif2f}).  The ensemble strategy samples $5\times4$k times, so a proper baseline is CE loss with the same test-time compute of pass@N with $N=20$k. %
The ensemble strategy outperforms this stringent baseline with significant excesses of $5.2\%$ on Mathlib and $4.1\%$ on MiniF2F.   

These results indicate that varying $N_\text{eff}$ in DCO\textsuperscript{step} at training time allows a diversity of tree search strategies trading depth and breadth that can be exploited by scaling test-time compute via pass@N.

\subsection{Approximate DCO also improves performance with chain-of-thought reasoning}
\label{subsec:cot-online}

\begin{figure*}[t]
\begin{center}
\centerline{\includegraphics[width=\textwidth]{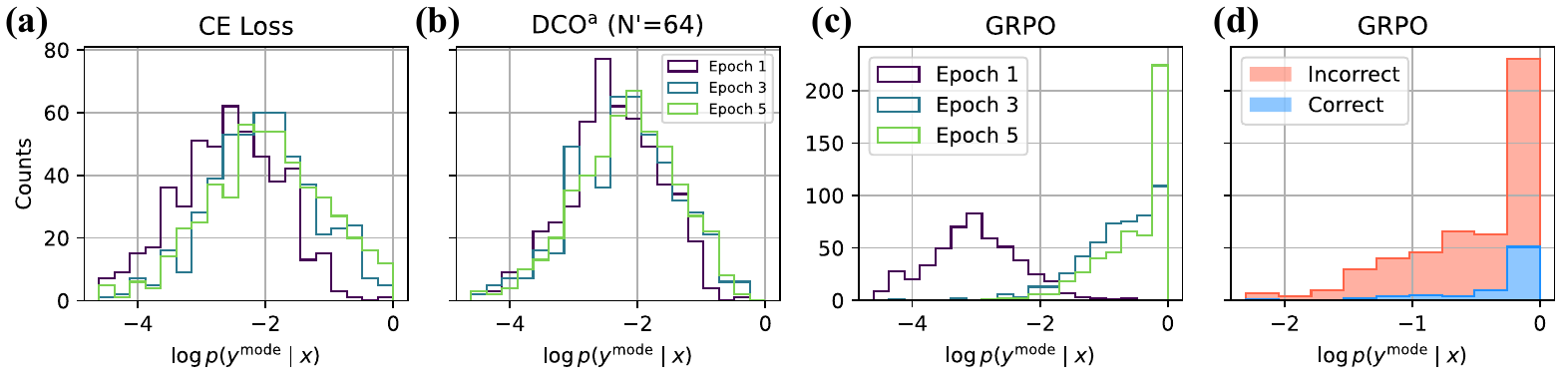}}
\caption{
\textbf{(a, b)} \textbf{Overconfidence persists in CoT fine-tuning with CE loss.} We fine-tune a Llama-3-8B base model on CoT traces in the MATH dataset and plot the distribution of the estimated model confidences $\hat{p}(y^\mathrm{mode}|x)$ over samples in the test set at various points in training. \textbf{(a)} For CE loss, the model becomes more confident in its most likely answers  as training progresses---the confidence distribution shifts to the right. This effect is milder than in the direct answer setting (\cref{fig:overconfidence}). 
\textbf{(b)} DCO\textsuperscript{a} limits this shift of the confidence distribution over training.
\textbf{(c, d)}
\textbf{Overconfidence is also present in GRPO fine-tuning and can result in a model that is overconfident \textit{and} wrong.} \textbf{(c)} The model trained with GRPO assigns progressively larger
confidences to $\hat{p}(y^\mathrm{mode}|x)$ over the course of training. 
\textbf{(d)} At the end of
training, only a small portion of the model’s highly confident completions are correct, which will harm the model’s pass@N performance when scaling up $N$. 
}
\label{fig3}
\end{center}
\vskip -0.3in
\end{figure*}

In \cref{subsec:dco-overconf}, the model is trained to directly give an answer $y$ to a problem $x$. In this case, one can implement DCO at training time by explicitly computing the model confidence $\hat{p}(y|x)$ and using it in \cref{eq:DCO-gradients}. However, in a Chain-of-Thought (CoT) training paradigm, the training data consists of triplets $(x,c,y)$ where $x$ and $y$ are the problem and answer as before, but now $c$ is a CoT reasoning trace that explains how to derive $y$ from $x$. The model is trained on the triplet $(x,c,y)$, and at test time, given a new problem $x'$, it generates a reasoning trace $c'$ followed by an answer $y'$.  Importantly, the model is evaluated on whether its answer $y'$ is correct {\it independent} of its reasoning trace $c'$.  

Thus, to estimate the probability $\hat p(y|x)$ that the model assigns to any answer $y$, one can no longer compute it directly as in the direct answer case.
Instead, one must marginalize over all possible reasoning traces $c$ to obtain $\hat{p}(y|x) = \sum_c \hat{p}(y,c|x)$. Exact marginalization is intractable, so 
we replace the marginalization with a Monte Carlo estimate of $\hat{p}(y|x)$, which we insert into the overconfidence regularization factor $F$ in \cref{eq:DCO-gradients}. We call this algorithm approximate DCO, denoted by DCO\textsuperscript{a}. We also compute Monte Carlo estimates of the probability the model assigns to its most likely final answer, $\hat{p}(y^\mathrm{mode}|x)$, as a proxy for the model's max confidence. Note that in the CoT setting the greedy answer would correspond to a single {\it most} likely reasoning trace, but we want the probability of the most likely \textit{final} answer marginalizing over {\it all} reasoning traces.

We conduct experiments on the MATH dataset. As a baseline, we first fine-tune a Llama-3-8B-base model using CE loss on the golden CoT solutions. We find in \cref{table:COT} that pass@1 coverage robustly improves over the course of CE loss training. Intriguingly, in the CoT setting, overfitting induced by the misalignment of CE loss at training time with the pass@N strategy at test time, appears less severe compare with the direct answer setting (cf.~\cref{table:misalignment}). Nevertheless, we still find a clear increase in the model's max confidence through training (\cref{fig3} (a)). In contrast, training with DCO\textsuperscript{a}  mitigates the rise in the model's max confidence (\cref{fig3} (b)).  As with the direct answer setup,  model trained with DCO\textsuperscript{a} ($N'=64$) exhibits lower pass@1 performance compared to the CE baseline but demonstrates superior performance when evaluated with pass@N for larger $N$ values (\cref{table:COT}).

To provide a comprehensive comparison, we also evaluate a model trained with reinforcement learning. Specifically, we use GRPO~\cite{deepseekmath} to train the base model with the r1~\cite{deepseekr1} prompt template. Interestingly, we observe that the GRPO-trained model exhibits pronounced overconfidence (\cref{fig3} (c)), yet only a small portion of the model's highly confident answers are correct (\cref{fig3} (d)). This negatively impacts the model's pass@N performance when scaling up $N$ (\cref{table:COT}). Although GRPO demonstrates superior pass@1 performance, the model trained with DCO\textsuperscript{a} achieves a significantly stronger performance when evaluated with pass@N for larger $N$ values (\cref{table:COT}).

Overall these results once again validate our theory and algorithmic approaches, now in a CoT reasoning setting.  In essence, improved performance by scaling test time compute via a pass@N strategy at large $N$ can be best obtained by aligning the training strategy via choosing DCO\textsuperscript{a} with the same $N$, and the origin of the performance improvement comes from limiting model overconfidence.

\begin{table}[t]
\caption{Pass@N coverage on MATH ($\pm$ indicates standard error of the mean). We fine-tune Llama-3-8B-base on CoT traces. Training with DCO\textsuperscript{a} using $N'=64$ underperforms CE loss for pass@1, but outperforms it for higher $N$, with the largest improvement when $N'=N$. The GRPO model has the strongest pass@1 performance, but its performance degrades at larger $N$ due to overconfidence.}
\label{table:COT}
\begin{center}
\setlength{\tabcolsep}{2.5pt} 

\resizebox{\columnwidth}{!}{
\begin{tabular}{lccccccccc}
\toprule
 & \multicolumn{3}{c}{Epoch 1} & \multicolumn{3}{c}{Epoch 3} & \multicolumn{3}{c}{Epoch 5} \\
\cmidrule(lr){2-4} \cmidrule(lr){5-7} \cmidrule(lr){8-10}
 Pass@N & CE & GRPO & DCO$^\mathrm{a}$ & CE & GRPO & DCO$^\mathrm{a}$ & CE & GRPO & DCO$^\mathrm{a}$ \\
\midrule
Pass@1  & $4.3^{\pm0.1}$  &
3.0$^{\pm0.1}$  & 
 \textbf{5.0}$^{\pm0.1}$ & 
9.0$^{\pm0.1}$  & 
\textbf{12.6}$^{\pm0.6}$  &
8.3$^{\pm0.1}$  & 9.9$^{\pm0.1}$ & \textbf{14.6}$^{\pm0.1}$ & 7.8$^{\pm0.1}$ \\

Pass@16 & 30.2$^{\pm0.7}$ &
27.1$^{\pm0.7}$ &
\textbf{34.2}$^{\pm0.6}$ & 41.6$^{\pm0.4}$  & 
35.8$^{\pm0.3}$  & 
\textbf{42.6}$^{\pm0.6}$  & 40.9$^{\pm0.4}$  & 
29.2$^{\pm0.2}$  & 
\textbf{42.8}$^{\pm0.5}$ \\

Pass@64 & 51.2$^{\pm1.2}$ & 
50.6$^{\pm0.7}$ & 
\textbf{55.3}$^{\pm1.0}$ & 
61.7$^{\pm0.4}$  & 
46.3$^{\pm0.4}$ & 
\textbf{63.1}$^{\pm0.8}$  & 60.9$^{\pm0.6}$  & 
35.4$^{\pm0.3}$ & 
\textbf{64.3}$^{\pm0.6}$ \\

\bottomrule
\end{tabular}
}
\end{center}
\vskip -0.2in
\end{table}

\section{Discussion}
\label{sec:discussions}
In summary, all our results suggest the need for a tight co-design of two traditionally separate phases of LLM development: (1) model training or fine-tuning and (2) test-time search/reasoning strategies and budget.  If the former is misaligned with the latter, then more training can actually {\it impair} performance gains from scaling test-time compute.  But if they are properly co-designed, end-to-end training and test time performance can be far more effective.  We have shown how to modify standard cross-entropy loss for training to be better aligned to a pass@N strategy for large $N$ at test-time.  Moreover, we have suggested and empirically confirmed why this co-design of training loss and pass@N strategy is essential, because optimal policies for pass@N at large $N$ should unconfidently explore while the optimal policies for pass@N at small $N$ should confidently exploit.

\textbf{Limitations.} Our study primarily investigates SFT with the pass@N test-time strategy for its simplicity. However, this strategy  requires a verifier, restricting its applicability to problems where verification is feasible. Extending our findings to other test-time strategies, such as best-of-N, is an important direction for future research. Additionally, while we have noted similar overconfidence phenomena in GRPO, a more comprehensive study of RL methods
is beyond the current scope.
Lastly, our work only focuses on mathematical reasoning; generalizing our findings to other domains, such as coding tasks where unit tests serve as natural verifiers, is left for future work.

\section*{Acknowledgments and Disclosure of Funding}
We would like to thank Zhinan Cheng, Clémentine Dominé, Marco Fumero, David Klindt,  Daniel Kunin, Ben Sorscher and Atsushi Yamamura for helpful discussions.
S.G. thanks the James S. McDonnell Foundation, Simons Foundation, NTT Research, Schmidt Foundation and an NSF CAREER Award for support.
S.D. and F.C. were partially supported by the McKnight Foundation, the Simons Foundation and an NIH CRCNS grant R01DC020874.
We are also grateful to Adam Ousherovitch and Hongda Yuan for identifying an error in~\cref{lemma:upperboundfora1}, which has been corrected in the revised version of this paper.

\bibliographystyle{unsrtnat}
\bibliography{neurips_2025}

\newpage 
\appendix

\section{Proofs}
\label{app:proofs}

\subsection{Proof of approximately well calibration under optimal policy}
\begin{lemma}
\label{lemma:calibrated}
    Given any problem, let $\hat{p}_i$ denote the model probability or confidence assigned to answer $i$, and assume answers are sorted from highest to lowest confidence so that $\hat{p}_i \geq \hat{p}_{i+1} , \forall\,i\geq 1$. Let $p_i$ be the probability that answer $i$ is {\it actually} correct across all the problems. Then optimal policies maximizing pass@N coverage in \cref{eq:optimalstrategy} (denoted by $\hat{p}_{i}^{*}$) is approximately well calibrated, i. e. higher confidence implies higher or equal probability of being correct, i.e. $p_{i}\geq p_{i+1}, \forall\,i\geq 1$.
\end{lemma}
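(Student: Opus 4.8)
The plan is to recast the lemma as a rearrangement (optimal-assignment) statement and then prove it by a one-step exchange argument. First I would rewrite the pass@N coverage of \cref{eq:optimalstrategy} in the ``ranked-answer'' coordinates used to state the lemma: under the approximating assumptions that each problem has essentially a single correct answer and that the policy applies a common confidence profile $\hat{p}_1 \ge \hat{p}_2 \ge \cdots$ across problems, the coverage reduces to
\begin{equation*}
\mathcal{C}^N \;=\; \sum_{i\ge 1} p_i\,\bigl(1-(1-\hat{p}_i)^N\bigr),
\end{equation*}
because a problem whose correct answer is ranked $i$-th contributes $1-(1-\hat{p}_i)^N$ to the coverage, and this event has probability $p_i$. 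In these coordinates the multiset of accuracies $\{p_i\}$ is fixed by the task, the confidences obey $\hat{p}_i \ge 0$ and $\sum_i \hat{p}_i = 1$, and the only freedom available to the policy is how to allocate confidence mass across the answer-ranks.

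Next I would carry out the exchange argument. Suppose, for contradiction, that an optimal policy $\hat{p}^{*}$ has a calibration violation at adjacent ranks, i.e.\ $\hat{p}^{*}_i > \hat{p}^{*}_{i+1}$ but $p_i < p_{i+1}$ for some $i$. Form the policy $\hat{p}'$ obtained from $\hat{p}^{*}$ by interchanging the confidence values at ranks $i$ and $i+1$; this is still a valid distribution with the same multiset of confidences. A direct computation gives
\begin{equation*}
\mathcal{C}^N(\hat{p}') - \mathcal{C}^N(\hat{p}^{*}) \;=\; (p_{i+1}-p_i)\,\bigl((1-\hat{p}^{*}_{i+1})^N - (1-\hat{p}^{*}_i)^N\bigr).
\end{equation*}
Since $g_N(t) = 1-(1-t)^N$ is strictly increasing on $[0,1]$ for every $N \ge 1$, the second factor is strictly positive when $\hat{p}^{*}_i > \hat{p}^{*}_{i+1}$, while the first factor is positive by assumption; hence $\mathcal{C}^N$ strictly increases, contradicting optimality. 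Therefore at any optimum, $\hat{p}^{*}_i > \hat{p}^{*}_{i+1}$ forces $p_i \ge p_{i+1}$; if instead $\hat{p}^{*}_i = \hat{p}^{*}_{i+1}$ the swap leaves the coverage unchanged, so one may simply relabel the tied answers to order their accuracies as well. Iterating this adjacent-swap step (a bubble-sort argument) promotes the adjacent-rank statement to the global claim $p_i \ge p_{i+1}$ for all $i$, i.e.\ the optimal policy is well calibrated.

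I expect the only genuinely delicate point to be the qualifier ``approximately,'' which I would make explicit rather than sweep away: it enters (i) in reducing \cref{eq:optimalstrategy} to the scalar sum above, where the single-correct-answer and shared-profile idealizations are used, and (ii) within a block of equal confidences, where the argument does not pin down the order of the corresponding accuracies. Neither point should require real work. The core of the proof is the single swap inequality together with the observation that the monotone-ordering constraint on the $\hat{p}_i$ is merely a labeling convention, so interchanging two confidence values never violates feasibility; the main thing to get right is setting up the ranked-answer objective so that this exchange is manifestly legal.
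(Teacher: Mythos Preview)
Your proposal is correct and follows essentially the same route as the paper: both reduce the pass@N coverage to the scalar objective $\sum_i p_i\bigl(1-(1-\hat p_i)^N\bigr)$ (equivalently, minimize $\sum_i p_i(1-\hat p_i)^N$) and then run an adjacent exchange argument to contradict any miscalibration $p_j<p_{j+1}$ at a rank with $\hat p_j^{*}>\hat p_{j+1}^{*}$. The one small difference is the tie case $\hat p_j^{*}=\hat p_{j+1}^{*}$: you dispose of it by relabeling the tied answers, which suffices for the lemma as stated, whereas the paper instead perturbs $\hat p_j^{*}\mapsto\hat p_j^{*}-\delta$, $\hat p_{j+1}^{*}\mapsto\hat p_{j+1}^{*}+\delta$ and checks that the objective strictly increases, thereby ruling out such a tie at an optimum altogether. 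That buys a slightly stronger conclusion (no ties at miscalibrated ranks), but is not needed for the statement you are asked to prove; your relabeling shortcut is the cleaner way to finish.
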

\begin{proof}
    Let $R$ be the total number of answers the model can generate. Here  $R$ could possibly equal infinity. Let $\hat{p}_{i}^{*}$ denote the model probability or confidence assigned to answer $i$ under optimal policy maximizing pass@N coverage in \cref{eq:optimalstrategy}.
    Let $A_{i}$ be the event that the $i$\textsuperscript{th} ranked answer is not chosen in the $N$ passes and $B_{i}$ be the event that $i$\textsuperscript{th} ranked answer is chosen in the $N$ passes. Let $1_{B_{i}}$ be the indicator function of $B_{i}$.
    Then, given an event $\omega$, the probability of getting the correct answer is $\sum_{i=1}^{R}p_{i}1_{B_{i}}(\omega)$. Hence, the expected probability of getting the correct answer is 
    \begin{equation}
        \mathbb{E}\left(\sum_{i=1}^{R}p_{i}1_{B_{i}}\right)=\sum_{i=1}^{R}p_{i}\mathbb{P}(B_{i})=\sum_{i=1}^{R}p_{i}(1-\mathbb{P}(A_{i}))=1-\sum_{i=1}^{R}p_{i}\mathbb{P}(A_{i})=1-\sum_{i=1}^{R}p_{i}(1-\hat{p}_{i}^{*})^{N},
    \end{equation}
    If the current strategy is already optimal, any small change of $(\hat{p}_{1}^{*},...,\hat{p}_{R}^{*})$ would not increase the expected probability of getting the correct answer. 
    Now suppose we don't always have $p_{i}\geq p_{i+1}$, in other words, $p_{j} < p_{j+1}$ for some $j\geq 1$. If $\hat{p}_{j}^{*}<\hat{p}_{j+1}^{*}$, we have
    \begin{equation}
        \mathbb{E}\left(\sum_{i=1}^{R}p_{i}1_{B_{i}}\right)<1-\sum_{i=1}^{j-1}p_{i}(1-\hat{p}_{i}^{*})^{N}-\sum_{i=j+1}^{R}p_{i}(1-\hat{p}_{i}^{*})^{N}-p_{j+1}(1-\hat{p}_{j}^{*})^{N}-p_{j}(1-\hat{p}_{j+1}^{*})^{N},
    \end{equation}
    where the last inequality says that we can increase the expected probability of getting the correct answer by swapping the confidence on answer originally labeled as $j$ and $j+1$, which is a contradiction. 
    If $\hat{p}_{j}^{*}=\hat{p}_{j+1}^{*}$, %
    then for all $\delta>0$ sufficiently small, we always have
    \begin{equation}
        \mathbb{E}\left(\sum_{i=1}^{R}p_{i}1_{B_{i}}\right)<1-\sum_{i=1}^{j-1}p_{i}(1-\hat{p}_{i}^{*})^{N}-\sum_{i=j+1}^{R}p_{i}(1-\hat{p}_{i}^{*})^{N}-p_{j}(1-\hat{p}_{j}^{*}-\delta)^{N}-p_{j+1}(1-\hat{p}_{j+1}^{*}+\delta)^{N},
    \end{equation}
    which also contradict with the current policy being optimal. 
\end{proof}
\subsection{Proof of Lemma 4.2}
Assuming $p_{1}$ is known, an upper bound on $\hat{p}_{1}^{*}$ is given by,
\begin{equation}\label{eq:upperbound}
\begin{aligned}
    &u=\underset{\{p_{s}\}_{s=1}^{k}}{\max}\pi_{1}\left(\underset{\{\hat{p}_{i}\}_{i=1}^{k}}{\text{argmin}}\left\{\sum_{i=1}^{k}p_{i}(1-\hat{p}_{i})^{N}\right\}\right)\\
    &\text{s.t.}\; \sum_{s=1}^{k}p_{s}=1-\epsilon ,\; p_{1}\geq p_{2}...\geq p_{k}>0.
\end{aligned}
\end{equation}
Suppose $\{p_{i}\}_{i=1}^{k}$ solves \cref{eq:upperbound}, and let $\{\hat{p}_{i}^{*}\}_{i=1}^{k}$ be the optimizer of
\begin{equation}
    \underset{\{\hat{p}_{i}\}_{i=1}^{k}}{\text{argmin}}\{\sum_{i=1}^{k}p_{i}(1-\hat{p}_{i})^{N}\} \;\text{s.t.}\; \sum_{s=1}^{k}\hat{p}_{s}=1,\;1\geq \hat{p}_{1}\geq 0, ..., 1\geq \hat{p}_{k}\geq 0.
\end{equation}
By definition, $u=\hat{p}_{1}^{*}$. Since $p_{1}\geq p_{2}...\geq p_{k}$, we also have $\hat{p}_{1}^{*}\geq \hat{p}_{2}^{*}...\geq \hat{p}_{k}^{*}$. 
There are $k$ possible cases.
\begin{itemize}
    \item[1)] $\hat{p}_{1}^{*} = 1$ and $\hat{p}_{s}^{*} = 0$ for all $s \ge 2$.
    \item[$j$)] $\hat{p}_{j}^{*} > 0$ but $\hat{p}_{j+1}^{*} = 0$. For $2 \leq j \leq k-1$. There are $k-2$ cases.
    \item[$k$)] $\hat{p}_{k}^{*} > 0$.
\end{itemize}
We now characterize when each case can occur and derive an upper bound in each case.

\subsubsection{Case $k$}
\begin{lemma}\label{lemma:casekoptimizer}
    If $\hat{p}_{k}^{*}>0$, then $p_{1}=p_{2}=...=p_{k-1}$.
\end{lemma}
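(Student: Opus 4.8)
The plan is to analyze the constrained optimization in \cref{eq:upperbound} via first-order (Lagrangian/KKT) conditions restricted to the case where the inner optimizer $\{\hat p_i^*\}_{i=1}^k$ has \emph{all} coordinates strictly positive, and to show that the outer maximization over $\{p_s\}$ then forces $p_1=\dots=p_{k-1}$. First I would write the Lagrangian for the inner minimization $\min \sum_{i=1}^k p_i(1-\hat p_i)^N$ subject to $\sum_i \hat p_i = 1$ with all $\hat p_i>0$ (so the box constraints $\hat p_i\ge 0$ are inactive and their multipliers vanish). Stationarity gives $-N p_i(1-\hat p_i^*)^{N-1} = \lambda$ for every $i\in[k]$, i.e.\ $p_i(1-\hat p_i^*)^{N-1}$ is the same constant across all $i$. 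Solving, $1-\hat p_i^* = (-\lambda/(Np_i))^{1/(N-1)}$, so $\hat p_i^*$ is an explicit decreasing function of $p_i$, and the normalization $\sum_i \hat p_i^* = 1$ pins down $\lambda$ in terms of $\{p_i\}$.

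Next I would substitute this closed form into the objective we are \emph{maximizing} in the outer problem, namely $\hat p_1^* = 1 - (-\lambda/(Np_1))^{1/(N-1)}$, and observe that it is a function of $p_1$ and of $\lambda=\lambda(p_1,\dots,p_k)$ only. The key step is then to compute $\partial \hat p_1^*/\partial p_j$ for $j\ge 2$ at an interior-stationary configuration and show it has a definite sign — specifically that increasing $p_j$ (for $2\le j\le k-1$) while compensating via $p_1$ to maintain $\sum p_s = 1-\epsilon$ and the ordering $p_1\ge\dots\ge p_k$ strictly increases $\hat p_1^*$. This would imply that at the outer maximum, each $p_j$ for $j\le k-1$ must be pushed up until it hits the ordering constraint $p_{j-1}=p_j$, collapsing $p_1=\dots=p_{k-1}$; $p_k$ remains free (only bounded above by $p_{k-1}$ and below by $0$), which is exactly the claimed conclusion. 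Concretely, I would use the envelope theorem / implicit differentiation of the normalization identity $\sum_i \bigl(-\lambda/(Np_i)\bigr)^{1/(N-1)} = k-1$ to get $\partial\lambda/\partial p_j$, then chain through to $\partial\hat p_1^*/\partial p_j$, keeping track of the $p_1$-compensation.

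I expect the main obstacle to be the sign analysis of $\partial \hat p_1^*/\partial p_j$: after substituting the closed forms one gets a ratio of sums of powers $p_i^{-1/(N-1)}$, and one must verify the inequality holds for \emph{all} $N>1$ (recall $1/(N-1)>0$, and for $N<2$ one must be careful with the direction of monotonicity of $t\mapsto t^{1/(N-1)}$ — it is increasing for all $N>1$, so that is fine, but the algebra near $N\to 1^+$ and $N\to\infty$ should be sanity-checked). A secondary subtlety is justifying that the relevant optimizer is indeed interior (all $\hat p_i^*>0$) precisely in ``Case $k$'' as defined in the excerpt — but this is exactly the hypothesis $\hat p_k^*>0$ combined with the already-established ordering $\hat p_1^*\ge\dots\ge\hat p_k^*$, so no extra work is needed there. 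I would also note the boundary possibility that the outer maximum is attained with some $p_j$ strictly between $p_{j-1}$ and $p_{j+1}$; ruling this out is precisely what the strict-sign computation accomplishes, so the whole argument reduces to that one monotonicity lemma.
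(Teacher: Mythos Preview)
Your first-order setup (the KKT stationarity $p_i(1-\hat p_i^*)^{N-1}=\text{const}$, yielding $\hat p_1^*=1-(k-1)p_1^{-\alpha}\big/\sum_s p_s^{-\alpha}$ with $\alpha=\tfrac{1}{N-1}$) matches the paper and is the right starting point. However, the perturbation step has a genuine gap: in the outer maximization \eqref{eq:upperbound} defining $u$, the value $p_1$ is a \emph{fixed parameter} (the bound is stated as $u=u(N,p_1,\epsilon,k)$), and only $p_2,\dots,p_k$ are optimized. You therefore cannot ``compensate via $p_1$'' to maintain $\sum_s p_s=1-\epsilon$.

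Worse, even if $p_1$ were free, the direction you propose has the wrong sign. With the closed form $\hat p_1^*=1-(k-1)p_1^{-\alpha}/S$ where $S=\sum_s p_s^{-\alpha}$, the swap $p_j\to p_j+\delta$, $p_1\to p_1-\delta$ (for $p_j\le p_1$) increases the numerator $p_1^{-\alpha}$ and, by convexity of $t\mapsto t^{-\alpha}$, weakly \emph{decreases} $S$; both effects push $\hat p_1^*$ down, not up. The correct move---and this is exactly what the paper does---is to compensate via $p_k$: keep $p_1,\dots,p_{k-2}$ fixed, set $q_{k-1}=p_{k-1}+\delta$ and $q_k=p_k-\delta$. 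Since $p_{k-1}\ge p_k$, convexity of $t^{-\alpha}$ gives $S'>S$ strictly, hence $\hat q_1^*>\hat p_1^*$, contradicting optimality unless $p_1=\dots=p_{k-1}$. Once you have the explicit expression for $\hat p_1^*$, no implicit differentiation of the normalization is needed; the whole argument reduces to this one convexity observation.
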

\begin{proof}
    In this case, we have
    \begin{equation}\label{eq:casekexpression}
        \hat{p}_{k}^{*}=1-\frac{(k-1)p_{k}^{-\alpha}}{\sum_{s=1}^{k}p_{s}^{-\alpha}}>0,\; \hat{p}_{1}^{*}=1-\frac{(k-1)p_{1}^{-\alpha}}{\sum_{s=1}^{k}p_{s}^{-\alpha}}.
    \end{equation}
    where $\alpha=\frac{1}{N-1}$. 
    Suppose, for contradiction, that $p_{1} > p_{k-1}$. We construct a perturbed configuration
    $\{q_{i}\}_{i=1}^{k}$ such that the corresponding optimizer
    $\{\hat{q}_{i}^{*}\}_{i=1}^{k}$ satisfies $\hat{q}_{1}^{*} > \hat{p}_{1}^{*}$.
    Choose a small $\delta > 0$ and define
    \[
        q_{1} = p_{1},\qquad\dots\qquad,q_{k-2} = p_{k-2},\qquad
        q_{k-1} = p_{k-1} + \delta,\qquad
        q_{k} = p_{k} - \delta.
    \]
    For $\delta > 0$ sufficiently small, we still have $\hat{q}_{k}^{*} > 0$ (by continuity), and moreover $\hat{q}_{1}^{*} > \hat{p}_{1}^{*}$, contradicting the optimality of the original configuration $\{p_{i}\}_{i=1}^{k}$. Hence, $p_{1} = p_{2} = \dots = p_{k-1}$.
\end{proof}
\noindent By Lemma~\ref{lemma:casekoptimizer}, we have
\begin{equation}
    (k-1)p_{1}+p_{k}=1-\epsilon,\; (k-1)p_{1}^{-\alpha}>(k-2)p_{k}^{-\alpha},\; p_{1}\geq p_{k}>0.
\end{equation}
This implies the following range for $p_{1}$ in Case $k$:
\begin{equation}\label{eq:casekcondition}
    \frac{1-\epsilon}{k}\leq p_{1}<\frac{1-\epsilon}{k-1+\left(\frac{k-2}{k-1}\right)^{N-1}}.
\end{equation}
Under condition \cref{eq:casekcondition}, we have 
\begin{equation}\label{eq:case1result}
    u=1-\frac{(k-1)p_{1}^{-\alpha}}{(k-1)p_{1}^{-\alpha}+(1+p_{1}-\epsilon-kp_{1})^{-\alpha}}.
\end{equation}
Moreover, the right hand side of \cref{eq:case1result} is an increasing function of $p_{1}$.

\subsubsection{Case $j$}
\begin{lemma}\label{lemma:casejoptimizer}
    If $\hat{p}_{j}^{*}>0$ but $\hat{p}_{j+1}^{*}=0$ for some $2\leq j\leq k-1$, then $p_{1}=...=p_{j-1}$ and $p_{j+1}=...=p_{k}>0$.
\end{lemma}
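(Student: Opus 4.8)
The plan is to argue exactly as in the proof of \cref{lemma:casekoptimizer}, after first pinning down the inner minimizer in Case $j$. Restricting the Lagrange-multiplier computation that produced \cref{eq:casekexpression} to the active coordinates $1,\dots,j$ (those with $\hat p_i^*>0$) gives
\[
\hat p_i^* = 1 - c\,p_i^{-\alpha}\ \ (1\le i\le j),\qquad \hat p_i^* = 0\ \ (i>j),\qquad \alpha=\tfrac{1}{N-1},\qquad c=\frac{j-1}{\sum_{s=1}^{j}p_s^{-\alpha}},
\]
where $c$ is fixed by $\sum_{i=1}^{j}\hat p_i^*=1$ and the common multiplier on the active set equals $N c^{N-1}$. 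The hypothesis $\hat p_j^*>0$ is precisely $c<p_j^{\alpha}$, and the KKT / complementary-slackness condition guaranteeing that $\hat p_{j+1}^*=0$ is actually optimal is $N p_{j+1}\le N c^{N-1}$, i.e.\ $p_{j+1}\le c^{N-1}$ (which, via $p_{j+1}\ge p_{j+2}\ge\cdots$, also covers the remaining inactive coordinates). In particular $u=\hat p_1^*=1-(j-1)p_1^{-\alpha}\big/\sum_{s=1}^{j}p_s^{-\alpha}$ depends on the outer variables only through $p_1,\dots,p_j$.

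Second, I would establish $p_1=\dots=p_{j-1}$ by contradiction, mirroring \cref{lemma:casekoptimizer}. If these coordinates are not all equal, then since $p_1\ge\cdots\ge p_{j-1}$ one can find a feasible perturbation transferring a small mass $\delta>0$ so as to \emph{increase} the spread among $\{p_1,\dots,p_{j-1}\}$ (push a larger one up and a smaller one down, or drain mass from the free tail coordinate $p_k$ into $p_1$); for $\delta$ small the ordering and the strict case inequalities $c<p_j^{\alpha}$ and---when it is strict---$p_{j+1}<c^{N-1}$ are preserved by continuity. Because $t\mapsto t^{-\alpha}$ is strictly convex, such a perturbation does not increase $p_1^{-\alpha}$ while strictly increasing $\sum_{s=1}^{j}p_s^{-\alpha}$ (the latter using $\sum_{s=1}^{j}p_s^{-\alpha}>p_1^{-\alpha}$, since $j\ge2$), hence strictly decreases $(j-1)p_1^{-\alpha}\big/\sum_{s=1}^{j}p_s^{-\alpha}$ and strictly increases $u$, contradicting optimality. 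For the tail, note that $u$ is insensitive to how the fixed mass $T=1-\epsilon-\sum_{i=1}^{j}p_i$ is split among $p_{j+1},\dots,p_k$, so we may take an optimal Case-$j$ configuration with $p_{j+1}=\dots=p_k=T/(k-j)$ (positivity being inherited from the constraint set of \cref{eq:upperbound}); legitimacy of this flat tail --- all entries positive, decreasing, and $\le c^{N-1}$ --- then holds exactly for a range of $p_1$ that is the Case-$j$ analogue of \cref{eq:casekcondition}, from which the Case-$j$ formula for $u$ (the analogue of \cref{eq:case1result}) follows.

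The main obstacle is the extra constraint $p_{j+1}\le c^{N-1}$ that distinguishes Case $j$ from Case $k$: a perturbation of $p_1,\dots,p_j$ moves $c$, so one must check the perturbed configuration still lies in Case $j$. The clean route is to treat the generic subcase $p_{j+1}<c^{N-1}$ first, where continuity handles everything, and separately observe that if $p_{j+1}=c^{N-1}$ one may trade a little mass between $p_{j+1}$ and the remaining tail $p_{j+2},\dots,p_k$ (or shave $p_{j+1}$ strictly below $c^{N-1}$) without changing $u$, reducing to the generic subcase. Everything else---the strict-convexity estimate for the mass transfer and the monotonicity in $c$ of the $\hat p_i^*$ and of $c^{N-1}$---is the same elementary bookkeeping already used in \cref{lemma:casekoptimizer}.
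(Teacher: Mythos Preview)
Your tail argument is the main gap. You correctly note that, once the inner minimizer is in Case $j$, the value $u=\hat p_1^*$ depends only on $p_1,\dots,p_j$; from this you conclude that one ``may take'' the flat tail $p_{j+1}=\dots=p_k=T/(k-j)$. But that establishes only that \emph{some} Case-$j$ maximizer has a flat tail, not that \emph{every} maximizer does, which is what the lemma asserts. The point you miss is that the tail is not a free parameter: it is coupled to $p_j$---which \emph{does} enter $u$---through the Case-$j$ boundary $p_{j+1}\le c^{N-1}$. The paper exploits this coupling. If $c_{j+1}<0$ (strict) at a putative optimizer, one can move mass from $p_j$ to $p_{j+1}$; decreasing $p_j$ increases $p_j^{-\alpha}$, hence $\sum_{s\le j}p_s^{-\alpha}$, hence $u$---a contradiction. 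So $c_{j+1}=0$. Then if $c_{j+2}<0$, one first moves mass from $p_{j+1}$ to $p_{j+2}$ (which leaves $u$ unchanged but drives $c_{j+1}$ strictly below $0$) and repeats the $p_j$-step; iterating forces $c_{j+1}=\dots=c_k=0$, which pins down $p_{j+1}=\dots=p_k$ uniquely. Your ``$u$ is insensitive to the tail'' observation, while true for fixed $p_1,\dots,p_j$, cannot by itself deliver this conclusion.

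Two smaller issues. First, $p_1$ is a \emph{fixed} parameter in the outer problem \eqref{eq:upperbound}, so your proposed perturbation ``drain mass from $p_k$ into $p_1$'' is not admissible; the paper's Step~3 (mirroring \cref{lemma:casekoptimizer}) instead moves mass between $p_{j-1}$ and $p_j$, never touching $p_1$. Second, ``positivity being inherited from the constraint set'' is too quick: the strict inequality $p_k>0$ makes the feasible region open in that direction, so one must argue (as the paper does in its Step~1) that the supremum is not approached on the boundary $p_k=0$.
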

\begin{proof}
We proceed in three steps.

\smallskip\noindent
\textbf{Step 1: we prove $p_{k}>0$.} Suppose, by contradiction, $p_{k} = 0$. Let $s>2$ be the smallest index such that
    $p_{s} = \dots = p_{k} = 0$. If $s>j$, change $p_{j}$ to $p_{j}-\delta$, $p_{j+1}$ to $\delta$ for some small $\delta>0$
    If $s\leq j$, change $p_{s-1}$ to $p_{s-1}-\delta$, $p_{j}$ to $\delta$ for some small $\delta>0$.
    In either case, for $\delta$ sufficiently small, the new system remains in Case $j$ but has strictly larger $\hat{p}_{1}^{*}$, a contradiction.

    \smallskip\noindent
    \textbf{Step 2: we prove $p_{j+1} = \dots = p_{k}$.} Being in Case $j$ implies
    \begin{equation}
    \begin{aligned}
        &c_{j}>0,\\
        &c_{r}\leq 0,\;\forall j+1\leq r\leq k,\\
        &\text{where}\; c_{r}\equiv 1-\frac{(r-1)p_{r}^{-\alpha}}{\sum_{t=1}^{r}p_{t}^{-\alpha}}.
    \end{aligned}
    \end{equation}
    If $c_{j+1}<0$, change $p_{j}$ to $p_{j}-\delta$, $p_{j+1}$ to $p_{j+1}+\delta$ for some small $\delta>0$. 
    The new system is still in Case $j$ but with strictly larger $\hat{p}_{1}^{*}$, which leads to contradiction. Hence $c_{j+1}=0$. 
    
    If $c_{j+2}<0$, then $p_{j+2}<p_{j+1}$ and we can change $p_{j+1}$ to $p_{j+1}-\delta$, $p_{j+2}$ to $p_{j+2}+\delta$ for some small $\delta>0$. Under this change, $c_{j+1}$ decreases, $c_{j+2}$ increases, $c_{s}, s\geq j+3$ decreases, so the configuration remains in Case $j$ but with equal $\hat{p}_{1}^{*}$ for $\delta>0$ sufficiently enough. But then $c_{j+1}<0$, contradicting $c_{j+1}=0$. Iterating this argument yields $c_{j+1} = \dots = c_{k} = 0$, which implies $p_{j+1} = \dots = p_{k}$.

    \smallskip\noindent
    \textbf{Step 3: We prove $p_{1} = \dots = p_{j-1}$.}
    This follows from the same argument as in the proof of \cref{lemma:casekoptimizer}.
\end{proof}
\noindent As a direct consequence, we obtain the following corollary.
\begin{corollary}
    Let $p_{j}=q$, $p_{j+1}=...=p_{k}=p$.
    The upper bound $u$ is given by
    \begin{equation}\label{eq:casejuexpression}
        u=1-\frac{(j-1)p_{1}^{^{-\frac{1}{N-1}}}}{(j-1)p_{1}^{-\frac{1}{N-1}}+q^{-\frac{1}{N-1}}}=\frac{q^{-\frac{1}{N-1}}}{(j-1)p_{1}^{-\frac{1}{N-1}}+q^{-\frac{1}{N-1}}}.
    \end{equation}
    Moreover,
    \begin{equation}\label{eq:trivialboundforuincasej}
        \frac{1}{j}\leq u<\frac{1}{j-1}.
    \end{equation}
    and $\{q, p, j\}$ satisfies
    \begin{equation}\label{eq:casejcondition}
    \begin{aligned}
        &p_{1}\geq q\geq p>0,\\
        &(j-1)p_{1}+q+(k-j)p=1-\epsilon,\\
        &(j-1)(p^{-\alpha}-p_{1}^{-\alpha})=q^{-\alpha},\; \alpha=\frac{1}{N-1},\\
        &q>\left(\frac{j-2}{j-1}\right)^{N-1}p_{1}.
    \end{aligned}
    \end{equation}
\end{corollary}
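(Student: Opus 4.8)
The plan is to combine the structure theorem for Case $j$ (\cref{lemma:casejoptimizer}) with an explicit solution of the inner minimization in \cref{eq:upperbound}. By \cref{lemma:casejoptimizer}, at the outer optimum in Case $j$ the accuracies satisfy $p_1 = \dots = p_{j-1}$ and $p_{j+1} = \dots = p_k > 0$, so writing $p_j = q$ and $p_{j+1} = \dots = p_k = p$ is unambiguous; the ordering $p_1 \geq \dots \geq p_k > 0$ then collapses to $p_1 \geq q \geq p > 0$, which is the first line of \cref{eq:casejcondition}, and substituting these multiplicities into the normalization $\sum_{s=1}^k p_s = 1-\epsilon$ gives $(j-1)p_1 + q + (k-j)p = 1-\epsilon$, the second line.

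To obtain the closed form for $u = \hat p_1^*$, note that in Case $j$ the inner minimizer has active set exactly $\{1,\dots,j\}$ (the definition of Case $j$ is $\hat p_j^* > 0$, $\hat p_{j+1}^* = 0$, and the ordering of the $\hat p_i^*$ then forces $\hat p_i^* > 0$ for $i \le j$ and $\hat p_i^* = 0$ for $i > j$), with $\sum_{i=1}^j \hat p_i = 1$. On this set the objective $\sum_i p_i(1-\hat p_i)^N$ is strictly convex (each term has second derivative $p_i N(N-1)(1-\hat p_i)^{N-2} > 0$ since $p_i > 0$), so KKT is necessary and sufficient; stationarity with a single equality multiplier forces $p_i(1-\hat p_i)^{N-1}$ to be constant over $i \in \{1,\dots,j\}$, i.e. $1-\hat p_i = C\,p_i^{-\alpha}$ with $\alpha = \tfrac{1}{N-1}$, and summing, using $\sum_{i=1}^j(1-\hat p_i) = j-1$, pins down $C = (j-1)\big/\sum_{t=1}^j p_t^{-\alpha}$. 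Inserting $p_1 = \dots = p_{j-1}$, $p_j = q$ and reading off the first coordinate yields $u = 1 - \frac{(j-1)p_1^{-\alpha}}{(j-1)p_1^{-\alpha} + q^{-\alpha}} = \frac{q^{-\alpha}}{(j-1)p_1^{-\alpha} + q^{-\alpha}}$, which is \cref{eq:casejuexpression}. In the notation $c_r \equiv 1 - \frac{(r-1)p_r^{-\alpha}}{\sum_{t=1}^r p_t^{-\alpha}}$ used before the corollary, this is exactly $\hat p_j^* = c_j$, and Case $j$ is characterized by $c_j > 0$ together with $c_{j+1} = 0$ (the latter established at the outer optimum in Step 2 of the proof of \cref{lemma:casejoptimizer}). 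Expanding $c_{j+1} = 0$ with the multiplicities gives $j\,p^{-\alpha} = (j-1)p_1^{-\alpha} + q^{-\alpha} + p^{-\alpha}$, i.e. $(j-1)(p^{-\alpha} - p_1^{-\alpha}) = q^{-\alpha}$ — the third line of \cref{eq:casejcondition} — while $c_j > 0$ rearranges to $(j-2)q^{-\alpha} < (j-1)p_1^{-\alpha}$, equivalently $q > \big(\tfrac{j-2}{j-1}\big)^{N-1}p_1$, the fourth line.

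Finally, the two-sided bound \cref{eq:trivialboundforuincasej} drops out of \cref{eq:casejuexpression}: from $q \leq p_1$ we get $p_1^{-\alpha} \leq q^{-\alpha}$, so the denominator is at most $j\,q^{-\alpha}$ and hence $u \geq 1/j$; and the inequality $(j-2)q^{-\alpha} < (j-1)p_1^{-\alpha}$ just obtained is precisely $(j-1)q^{-\alpha} < (j-1)p_1^{-\alpha} + q^{-\alpha}$, i.e. $u < 1/(j-1)$ (so the strict upper bound is nothing but a restatement of being in Case $j$). I expect the only delicate point to be the justification that the inner minimizer really has the water-filling form $1-\hat p_i \propto p_i^{-\alpha}$ on the prefix $\{1,\dots,j\}$ with a clean complementary-slackness boundary at $j+1$; but this structure is already encoded in the quantities $c_r$ and the Case-$j$ description established before the corollary, so the corollary itself reduces to substituting the multiplicities from \cref{lemma:casejoptimizer} and performing the elementary algebraic rearrangements above.
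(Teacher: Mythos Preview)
Your proposal is correct and follows essentially the same approach as the paper: the paper's proof is the single line ``This is an immediate consequence of \cref{lemma:casejoptimizer} and its proof (using $c_{j+1}=0$, $c_{j}>0$),'' and you have accurately unpacked exactly those ingredients---the multiplicities from the lemma, the KKT/water-filling form of the inner minimizer, the identity $c_{j+1}=0$ from Step~2 of the lemma's proof, and the inequality $c_j>0$---into the explicit algebra that yields \cref{eq:casejuexpression}, \cref{eq:trivialboundforuincasej}, and \cref{eq:casejcondition}.
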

\begin{proof}
    This is an immediate consequence of \cref{lemma:casejoptimizer} and its proof (using $c_{j+1}=0$, $c_{j}>0$).
\end{proof}
\noindent \noindent Once $j$ is determined, we can compute $q$ and hence $u$. \cref{eq:casejcondition} determines $j$ in the following sense.
\begin{lemma}\label{lemma:casejconditionsystemsolutionuniqueness}
    If $\{q_{0},p_{0},j_{0}\}$ satisfies \cref{eq:casejcondition}, then $j_{0}\geq j$.
\end{lemma}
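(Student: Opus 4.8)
The plan is to argue by contradiction. Suppose some triple $\{q_{0},p_{0},j_{0}\}$ with $j_{0}<j$ satisfies \cref{eq:casejcondition}; I will reconstruct from it an accuracy configuration that is feasible for \cref{eq:upperbound} yet whose induced inner optimum has a strictly larger first coordinate than the optimal value $u$ attained in Case $j$, contradicting optimality. Concretely, writing $\alpha=\tfrac{1}{N-1}$ and letting $p_{1}$ denote the max-accuracy value appearing in \cref{eq:casejcondition}, set $\tilde{p}_{1}=\dots=\tilde{p}_{j_{0}-1}=p_{1}$, $\tilde{p}_{j_{0}}=q_{0}$, and $\tilde{p}_{j_{0}+1}=\dots=\tilde{p}_{k}=p_{0}$. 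The monotonicity chain $p_{1}\geq q_{0}\geq p_{0}>0$ and the normalization $(j_{0}-1)p_{1}+q_{0}+(k-j_{0})p_{0}=1-\epsilon$ in \cref{eq:casejcondition} are exactly the constraints making $\{\tilde{p}_{i}\}$ feasible for \cref{eq:upperbound}. (Note $j_{0}\geq 2$ automatically, since for $j_{0}=1$ the stationarity equation in \cref{eq:casejcondition} would read $q_{0}^{-\alpha}=0$.)

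The key step is to show that the inner optimizer for $\{\tilde{p}_{i}\}$ is exactly the Case-$j_{0}$ solution. The inner problem $\min\sum_{i}\tilde{p}_{i}(1-\hat{p}_{i})^{N}$ over the simplex is strictly convex for $N\geq 2$, so it has a unique minimizer characterized by KKT. I would verify that the candidate $\hat{p}_{i}^{*}=1-\frac{(j_{0}-1)\tilde{p}_{i}^{-\alpha}}{\sum_{t\leq j_{0}}\tilde{p}_{t}^{-\alpha}}$ for $i\leq j_{0}$, and $\hat{p}_{i}^{*}=0$ for $i>j_{0}$, meets these conditions: the stationarity equality $(j_{0}-1)(p_{0}^{-\alpha}-p_{1}^{-\alpha})=q_{0}^{-\alpha}$ is precisely the requirement that the dual multipliers on the vanishing coordinates equal zero, and the strict inequality $q_{0}>\big(\tfrac{j_{0}-2}{j_{0}-1}\big)^{N-1}p_{1}$ is precisely $\hat{p}_{j_{0}}^{*}=c_{j_{0}}>0$ (using the notation $c_{r}$ from the proof of \cref{lemma:casejoptimizer}). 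Hence this candidate is the genuine inner optimizer, and its first coordinate equals $u_{0}=\frac{q_{0}^{-\alpha}}{(j_{0}-1)p_{1}^{-\alpha}+q_{0}^{-\alpha}}$, matching the formula in \cref{eq:casejuexpression}.

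Finally I would close with a counting estimate. Since $p_{1}\geq q_{0}$, the same computation underlying \cref{eq:trivialboundforuincasej} gives $u_{0}\geq 1/j_{0}$, whereas the optimal value in Case $j$ satisfies $u<1/(j-1)$ by \cref{eq:trivialboundforuincasej}. If $j_{0}<j$, i.e.\ $j_{0}\leq j-1$, then $u_{0}\geq 1/j_{0}\geq 1/(j-1)>u$, so the feasible configuration $\{\tilde{p}_{i}\}$ attains a strictly larger first coordinate than the claimed optimum---a contradiction. Therefore $j_{0}\geq j$.

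The step I expect to be the main obstacle is the middle one: confirming that $\{\tilde{p}_{i}\}$ really has its inner optimum supported on exactly $j_{0}$ coordinates with $\hat{p}_{1}^{*}=u_{0}$, rather than on a larger support that would change the value. This amounts to checking that the equalities and inequalities of \cref{eq:casejcondition} encode exactly the KKT conditions for the Case-$j_{0}$ point---in particular that the equality in the stationarity constraint drives the multipliers on the zero coordinates to zero (so those coordinates sit on the boundary of the active set and the true support is $j_{0}$), while the final strict inequality keeps $\hat{p}_{j_{0}}^{*}>0$. Once this is established, feasibility, the bound $u_{0}\geq 1/j_{0}$, and the contradiction are all routine bookkeeping.
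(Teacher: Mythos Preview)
Your proposal is correct and follows essentially the same contradiction argument as the paper's three-line proof: the configuration built from $\{q_0,p_0,j_0\}$ is feasible for \cref{eq:upperbound}, so its inner-optimum first coordinate $u_0$ satisfies $u_0\le u$, and then the bounds $u_0\ge 1/j_0$ and $u<1/(j-1)$ from \cref{eq:trivialboundforuincasej} force $j_0\ge j$. The paper simply takes your ``middle step'' for granted; your KKT verification is correct (the third equation of \cref{eq:casejcondition} is exactly $c_{j_0+1}=0$, and since $\tilde p_{j_0+1}=\dots=\tilde p_k=p_0$ one gets $c_r=0$ for all $r>j_0$, so the support is exactly $\{1,\dots,j_0\}$) but is not spelled out at the paper's level of detail.
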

\begin{proof}
    Let $u_{0}$ be the corresponding maximal value associated with $\{q_{0},p_{0},j_{0}\}$. Then $u_{0}\leq u$. Combining \cref{eq:casejcondition} and \cref{eq:casejuexpression} we have
    \begin{equation}
        \frac{1}{j_{0}}\leq u_{0}<\frac{1}{j_{0}-1}.
    \end{equation}
    If $j<j_{0}$, then $u_{0}>u$, which is a contradiction.
\end{proof}
\begin{lemma}\label{lemma:casejoptimizerdecreasewithn}
    For any fixed $\epsilon$, $p_{1}$ and $k$, $j=j(N)$ is a non-decreasing function of $N$. Since $1/j\geq u>1/(j-1)$, $u=u(p_{1})$ is a non-decreasing function of $N$.
\end{lemma}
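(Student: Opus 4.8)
The plan is to prove the equivalent—and ultimately desired—statement that $u=u(N,p_1,\epsilon,k)$ is a \emph{non-increasing} function of $N$; this simultaneously completes the proof of \cref{lemma:upperboundfora1}, and the claim about $j=j(N)$ then follows, because the bounds $\tfrac1j\le u<\tfrac1{j-1}$ from the Corollary above (\cref{eq:trivialboundforuincasej}, with the obvious analogue in Case $k$) force $j=\lceil 1/u\rceil$, and $\lceil 1/(\cdot)\rceil$ turns a non-increasing function into a non-decreasing one. I would \emph{not} route the argument through the explicit case-by-case formulas \cref{lemma:casekoptimizer}--\cref{lemma:casejconditionsystemsolutionuniqueness}: those pin down the \emph{value} of $u$ (needed for \cref{fig:theorem}) but make monotonicity in $N$ opaque. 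Instead, use the variational description \cref{eq:upperbound}: $u$ is the supremum, over all feasible accuracy profiles $p_1\ge p_2\ge\cdots\ge p_k>0$ with $p_1$ fixed and $\sum_i p_i=1-\epsilon$, of the top entry $\hat p_1^\star(\{p_i\},N)$ of the pass@$N$-optimal policy for that profile. Since a supremum of non-increasing functions is non-increasing and the feasible set does not depend on $N$, it suffices to show: for every fixed feasible profile, $N\mapsto\hat p_1^\star(\{p_i\},N)$ is non-increasing on $(1,\infty)$.

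To prove this per-profile claim, recall the water-filling form of the optimal policy already used in the paper: its support is a prefix $\{1,\dots,j\}$, on which $\hat p_i^\star=1-(\mu/p_i)^{1/(N-1)}$ with $\mu\in(0,p_j)$ fixed by $\sum_{i=1}^j\hat p_i^\star=1$; eliminating $\mu$ gives $\hat p_1^\star=1-(j-1)\big/\sum_{i=1}^j(p_1/p_i)^{1/(N-1)}$, exactly \cref{eq:casejuexpression}. First, on any interval of $N$ where the support $\{1,\dots,j\}$ is fixed, each ratio $p_1/p_i\ge1$, so $(p_1/p_i)^{1/(N-1)}$ is non-increasing in $N$; hence the denominator is non-increasing and $\hat p_1^\star$ is non-increasing there. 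Second, the support only grows with $N$: differentiating $\sum_i\big(1-(\mu/p_i)^{1/(N-1)}\big)_+=1$ implicitly shows $\mu$ is non-increasing in $N$ (each active term contributes with a definite sign because $\mu<p_i$ on the support), so $\{i:p_i>\mu\}$ is nested increasing, and $(1,\infty)$ splits into finitely many consecutive intervals on which the support is $\{1,2\}\subset\{1,2,3\}\subset\cdots$ in this order (for $N>1$ the support always has size $\ge2$). Third, $\hat p_1^\star$ is continuous in $N$ since $\sum_i p_i(1-\hat p_i)^N$ is strictly convex for $N>1$, and at a transition where answer $j{+}1$ enters one has $\mu=p_{j+1}$, so $(p_1/p_{j+1})^{1/(N-1)}=\big(\sum_{i\le j}(p_1/p_i)^{1/(N-1)}\big)/(j-1)$; substituting this shows the case-$j$ and case-$(j{+}1)$ expressions for $\hat p_1^\star$ agree at the junction. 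Being non-increasing on each consecutive interval and continuous across the junctions, $\hat p_1^\star$ is non-increasing on all of $(1,\infty)$; taking the supremum over profiles gives $u$ non-increasing, hence $j(N)=\lceil 1/u(N)\rceil$ non-decreasing.

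\textbf{The main obstacle} is the second step—showing the support is monotone in $N$ (equivalently $\mu$ non-increasing). This is essential: piecewise-monotone-plus-continuous does \emph{not} imply global monotonicity, so we genuinely need the support intervals to be ordered as $\{1,2\}\subset\{1,2,3\}\subset\cdots$. The implicit-differentiation computation is short once the stationarity conditions are in hand, but several boundary cases need a light touch: (i) ties at the top of the accuracy list, where the minimizer may fail to be unique, handled by an infinitesimal perturbation exactly as done elsewhere in this appendix; (ii) whether the supremum over profiles is attained—it is, after collapsing answers whose accuracy is driven to $0$, but only the weak inequality is needed in any case; and (iii) the degenerate endpoint $N\to1^+$, where the ``case $1$'' policy $\hat p_1=1$ is only a limit and every $N>1$ has support of size $\ge2$. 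None of these affect the weak monotonicity, so they can be dispatched quickly.
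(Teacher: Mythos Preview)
Your approach is correct and takes a genuinely different route from the paper's. The paper works directly with the Case-$j$ machinery: assuming $j(N{+}1)<j(N)$, it uses the elementary inequality $\bigl(\tfrac{j(N{+}1)-2}{j(N{+}1)-1}\bigr)^{N}<\bigl(\tfrac{j(N)-2}{j(N)-1}\bigr)^{N-1}$ to argue that the optimizing configuration at $N{+}1$ furnishes a feasible solution of the system \cref{eq:casejcondition} at $N$ with the smaller index $j(N{+}1)$, contradicting the minimality of $j(N)$ from \cref{lemma:casejconditionsystemsolutionuniqueness}. This is terse and leans entirely on the structural lemmas already proved, but it only delivers the integer monotonicity of $j$; the last clause of the lemma (monotonicity of $u$ itself) is then inferred from the bracket $1/j\le u<1/(j-1)$, which strictly speaking only controls $u$ across jumps of $j$, not on intervals where $j$ is constant. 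Your variational argument sidesteps the case analysis: by showing that for \emph{every} fixed accuracy profile the water-filling optimum $\hat p_1^\star$ is non-increasing in $N$, and then taking the supremum over the $N$-independent feasible set, you obtain the monotonicity of $u$ directly---which is precisely what \cref{lemma:upperboundfora1} asserts---and recover $j(N)=\lceil 1/u(N)\rceil$ non-decreasing as a corollary. The obstacle you flag (monotonicity of the multiplier $\mu$, hence of the support) is real but dispatches by comparative statics without differentiation: the defining relation $\sum_i\bigl(1-(\mu/p_i)^{1/(N-1)}\bigr)_+=1$ has a left side strictly decreasing in $\mu$ and, for each fixed $\mu<p_1$, non-increasing in $N$ (each base $\mu/p_i<1$ is raised to a shrinking positive power), so $\mu(N)$ is non-increasing globally.
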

\begin{proof}
    Suppose, for contradiction, that $j(N+1) < j(N)$. Then
    \begin{equation}
        \left(\frac{j(N+1)-2}{j(N+1)-1}\right)^{N}<\left(\frac{j(N)-2}{j(N)-1}\right)^{N-1}.
    \end{equation}
    Let $\{j(N+1),q(N+1),p(N+1)\}$ be the corresponding configuration at $N+1$. Using the inequality above, this configuration can be used to construct a larger value of $u$ at $N$ with index $j = j(N+1) < j(N)$, contradicting the minimality of $j$ in Lemma~\ref{lemma:casejconditionsystemsolutionuniqueness}.
\end{proof}

\subsubsection{Case 1}
This case is impossible.

\subsubsection{Summary of the Proof of Lemma 4.2}\label{subsubsection:summaryofproofoflemma42}
\begin{itemize}
    \item If 
    \begin{equation}
        \frac{1-\epsilon}{k}\leq p_{1}<\frac{1-\epsilon}{k-1+\left(\frac{k-2}{k-1}\right)^{N-1}},
    \end{equation}
    then Case $k$ applies. In this regime, the upper bound $u$ is an increasing function of $p_{1}$ and is given by
    \begin{equation}
        u=1-\frac{(k-1)p_{1}^{-\frac{1}{N-1}}}{(k-1)p_{1}^{-\frac{1}{N-1}}+(1+p_{1}-\epsilon-kp_{1})^{-\frac{1}{N-1}}}.
    \end{equation}

    \item Otherwise, Case $j$ applies, and the upper bound is
    \begin{equation}
        u=1-\frac{(j-1)p_{1}^{^{-\frac{1}{N-1}}}}{(j-1)p_{1}^{-\frac{1}{N-1}}+q^{-\frac{1}{N-1}}}.
    \end{equation}
    For any fixed $\epsilon$, $p_{1}$ and $k$, let $j$ be the smallest positive integer such that the following equations (viewing $\{q,p\}$ as variables) admits a solution:
    \begin{equation}
        \begin{aligned}
            &p_{1}\geq q\geq p>0,\\
            &(j-1)p_{1}+q+(k-j)p=1-\epsilon,\\
            &(j-1)(p^{-\alpha}-p_{1}^{-\alpha})=q^{-\alpha},\; \alpha=\frac{1}{N-1}.\\
            &q>\left(\frac{j-2}{j-1}\right)^{N-1}p_{1}.
        \end{aligned}
    \end{equation}
    The index $j$ is non-decreasing with respect to $N$, and $u$ is further bounded by
    \begin{equation}
        \frac{1}{j}\leq u<\frac{1}{j-1}.
    \end{equation}
\end{itemize}

\subsection{Proof of Lemma 4.3}
\begin{proof}
The following proof is motivated by the following intuition: given the values of $p_{1}$ and $p_{2}$, $\hat{p}_{1}^{*}$ attains its smallest possible value when the $p_{j}\;(j\geq 3)$ are as large as possible.
Let $s$ be the smallest integer such that $p_{1}+sp_{2}\geq 1$.
We have,
\begin{equation}
    \hat{p}_{1}^{*}=\pi_{1}\left(\underset{\{\hat{p}_{i}\}_{i=1}^{R}}{\text{argmin}}\{\sum_{i=1}^{R}p_{i}(1-\hat{p}_{i})^{N}\}\right)\geq \pi_{1}\left(\underset{\{\hat{p}_{i}\}_{i=1}^{s+1}}{\text{argmin}}\{p_{1}(1-\hat{p}_{1})^{N}+\sum_{i=2}^{s+1}p_{2}(1-\hat{p}_{i})^{N}\}\right).
\end{equation}
Second, for a fixed $\hat{p}_{1}$, Jensen's inequality tells us that
\begin{equation}
    p_{1}(1-\hat{p}_{1})^{N}+\sum_{i=2}^{s+1}p_{2}(1-\hat{p}_{i})^{N}\geq p_{1}(1-\hat{p}_{1})^{N}+sp_{2}(1-\frac{1-\hat{p}_{1}}{s})^{N}  
\end{equation}
Therefore, we have
\begin{equation}\label{eq:lowerboundminimization}
    \pi_{1}\left(\underset{\{\hat{p}_{i}\}_{i=1}^{s+1}}{\text{argmin}}\{p_{1}(1-\hat{p}_{1})^{N}+\sum_{i=2}^{s+1}p_{2}(1-\hat{p}_{i})^{N}\}\right)=\underset{\hat{p}_{1}}{\text{argmin}}\{p_{1}(1-\hat{p}_{1})^{N}+sp_{2}(1-\frac{1-\hat{p}_{1}}{s})^{N}\}
\end{equation}
Computing the right hand side of \cref{eq:lowerboundminimization} by taking derivatives we have
\begin{equation}\label{eq:hatp1lowerbound}
    \hat{p}_{1}^{*}\geq\underset{\hat{p}_{1}}{\text{argmin}}\{p_{1}(1-\hat{p}_{1})^{N}+sp_{2}(1-\frac{1-\hat{p}_{1}}{s})^{N}\}=1-\frac{sp_{1}^{\frac{1}{1-N}}p_{2}^{\frac{1}{N-1}}}{s+p_{1}^{\frac{1}{1-N}}p_{2}^{\frac{1}{N-1}}}.
\end{equation}
Since $s$ is the smallest integer such that $p_{1}+sp_{2}\geq 1$, we have
\begin{equation}\label{eq:boundfors}
    1+p_{2}>p_{1}+sp_{2},
\end{equation}
Combining \cref{eq:boundfors} and \cref{eq:hatp1lowerbound} we arrive at
\begin{equation}
    \hat{p}_{1}^{*}\geq 1-\frac{(1-p_{1}+p_{2})p_{1}^{\frac{1}{1-N}}p_{2}^{\frac{1}{N-1}}}{1-p_{1}+p_{2}+p_{1}^{\frac{1}{1-N}}p_{2}^{\frac{N}{N-1}}}.
\end{equation}
\end{proof}
\pagebreak

\section{Experimental details}
\label{app:exp_details}

 Our codebase uses PyTorch \cite{Ansel_PyTorch_2_Faster_2024}, Accelerate~\cite{accelerate}, and deepspeed (\url{https://github.com/microsoft/DeepSpeed}) to enable efficient training with memory constraints, and vllm \cite{kwon2023efficient} for efficient inference.  Code will be made available \href{https://github.com/allanraventos/refine}{here} (\href{https://github.com/allanraventos/refine}{https://github.com/allanraventos/refine}).

\textbf{MATH.} For experiments with MATH dataset~\cite{hendrycksMATH} (license: MIT), we fine-tune the Llama-3-8B-base~\cite{llama3} on the MATH~\cite{hendrycksMATH} dataset. We start from the base model rather than Llama-3-8B-Instruct to avoid potential leakage of the MATH dataset into Llama-3-8B-Instruct through post-training process. We follow \citet{prm8k} and use $12,000$ problems for training and the remaining $500$ for testing. In \cref{sec:misalignment,subsec:dco-overconf}, \cref{fig:overconfidence}, \cref{fig:2} (a) and (b), we fine-tune the model for 4 epochs with a learning rate of 2e-5 and batch size 64. We adopt a linear learning rate warmup in the first 20 steps. For experiments with DCO, some of the data may have an extreme confidence regularizer value, if the model is already quite confident in answers to certain problems. To maintain an approximately fixed batch size, we set a threshold of $0.3$ on the confidence regularizer $F$. Any training data examples with an $F$ lower than this threshold will be replaced with new training examples to construct the batch. 
For the CoT experiments in \cref{subsec:cot-online}, we use learning rate 2e-5 and batch size 128, with the same learning rate warmup.

\textbf{Theorem proving.} We use LeanDojo benchmark\cite{leandojo} (license: CC-BY 2.0) and adopt the random train and test split as introduced in \citet{leandojo}. The random test set includes 2,000 theorems. We fine-tune the model Qwen2.5-Math-1.5B~\cite{QwenMath2.5} on the training set for 3 epochs with learning rate 1e-5 and batch size 64. We adopt a linear learning rate warmup in the first 20 steps. To evaluate the model, we use LeanDojo~\cite{leandojo} to interact with the proof assistant. We impose a maximum wall clock time for each theorem, in addition to limiting the number of passes per problem. For experiments with 4k passes, the time budget is fixed at 5,000 seconds. In order to avoid the model going infinitely deep in the search tree, we limit the number of proof steps to be at most 50.

\textbf{DCO\textsuperscript{a} objective.} The DCO\textsuperscript{a} introduced in~\cref{subsec:cot-online} is an approximation for DCO. %
To construct a batch of size $B$ with DCO\textsuperscript{a}, we process batches of samples sequentially; for each batch, we run online inference on each of the samples and discard all samples with probability of success rate larger than $p^\mathrm{thresh}$. We choose to discard samples which have a DCO confidence reguarizer $F$ lower than 0.01, corresponding to $p^\mathrm{thresh}=0.1$ for $N'=64$. This process continues until we have enough training data for a single batch. In~\cref{fig:discarded-cot}, we plot the number of discarded samples as a function of training step for our DCO\textsuperscript{a} experiments (same ones as in~\cref{table:COT}). %

\textbf{GRPO.} We perform GRPO fine-tuning \cite{deepseekr1,deepseekmath} on problems of all difficulty levels in the MATH training set. We follow the modifications to GRPO proposed in Dr. GRPO (\cite{liu2025understandingr1zeroliketrainingcritical}), that is, we do not normalize advantages by the standard deviation of the group rewards, do not normalize a roll-out's contribution to the loss by its length, and do not include a KL divergence term with respect to the reference model. Furthermore, we perform only \textit{one} policy gradient update for each set of model roll-outs, so there is no need for clipping. This leads to the surrogate objective taking the form,
\begin{align*}
    \mathcal{L}(\theta) = \frac{1}{N'} \sum_{i=1}^{N'} \sum_{t=1}^{|o_i|} \frac{\pi_\theta(o_{i,t} \mid q, o_{i,<t})}{\mathrm{stop\_grad}\left(\pi_\theta(o_{i,t} \mid q, o_{i,<t})\right)}\hat{A}_{i,t}
\end{align*}
where we adopt notation similar to \cite{liu2025understandingr1zeroliketrainingcritical}; $N'$ is the number of roll-outs for a single problem statement $q$, $o_i$ is the $i$th roll-out in the group, $o_{i,t}$ is the $t$th token in $o_i$, and $\pi_\theta(o_{i,t} \mid q, o_{i,<t})$ is the probability assigned by the model to token $o_{i,t}$ conditioned on context $o_{i,<t}$. The $\mathrm{stop\_grad}$ operator prevents the flow of gradients when computing $\nabla_\theta\mathcal{L}$. The advantages, in turn, are computed as $\hat{A}_{i,t} = r(q,o_i) - \frac{1}{N'}\sum_{i'=1}^{N'} r(q,o_{i'})$, where $r(q,o_i)$ is the binary reward assigned by the verifier to the solution $o_i$ for the problem $q$.

For our experiments, we use $N'=8$ and batch size 128, which results in a total of 1,024 roll-outs per batch. We also perform batch balancing such that each batch contains no problems for which either $r(q,o_i) = 0$ for all $i$ or $r(q,o_i)=1$ for all $i$, since these samples do not provide any gradient signal. We use a constant learning rate of 1e-6,with a 20-step linear warmup; we train for 5 epochs. Note that batch balancing results in fewer gradient steps. Additionally, we use the r1 template from \cite{deepseekr1} for our experiments,

\begin{lstlisting}
A conversation between User and Assistant. The User asks a question, and the Assistant solves it. The Assistant first thinks about the reasoning process in the mind and then provides the User with the answer. The reasoning process is enclosed within <think> </think> and the answer is enclosed within <answer> </answer> tags, respectively, i.e., <think> reasoning process here </think> <answer> answer here </answer>.
User: {question}
Assistant: <think>
\end{lstlisting}
We strictly require solutions to follow the format specified in the prompt.

\begin{figure}[ht]
\begin{center}
\centerline{\includegraphics[width=0.4\columnwidth]{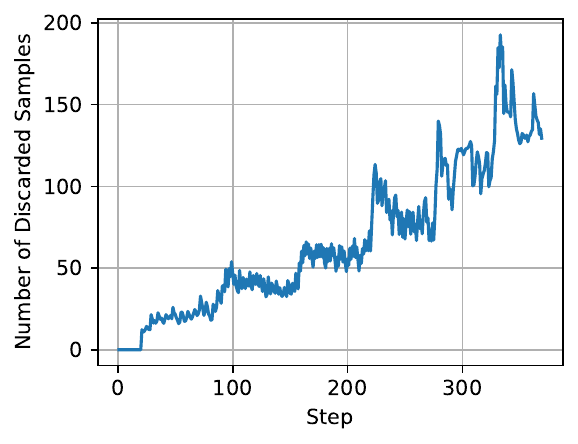}}
\caption{Number of discarded samples as a function of training step for the DCO\textsuperscript{a} experiments. The step structure reflects the model revisiting examples it has seen previously in training, where each step closely matches the start of a new epoch.}
\label{fig:discarded-cot}
\end{center}
\vspace{-20pt}
\end{figure}

\textbf{Implementing online inference.}
Throughout our experiments and analysis, we extensively use the open-source vllm package \cite{kwon2023efficient} for efficient inference. Integrating inference into the training loop to enable training under the DCO\textsuperscript{a} objective, as in \cref{subsec:cot-online}, poses a challenging implementation problem. We solve this problem by placing vllm worker processes, which together perform inference on all GPUs, in a separate process group and use Ray (\url{https://github.com/ray-project/ray}) to isolate them from the training loop. This enables running concurrent training and inference on the same set of GPUs. We believe this inference in the loop setup will be useful to the community, as it enables straightforward implementation of online data filtering approaches for LLM training.

\textbf{Computational cost.} All experiments with model size smaller than 10B are performed on machines with 8 NVIDIA H100 GPUs or 8 NVIDIA A100 GPUs. Experiments with model size larger than 10B are performed on machines with 8 NVIDIA H200 GPUs or 8 NVIDIA B200 GPUs. Specifically, for DCO on the MATH dataset, each run required approximately 1 NVIDIA H100 GPU-hour when using the Llama-3-8B-base model, and approximately 4 NVIDIA B200 GPU-hours when using the Llama-3-70B-base model. For DCO\textsuperscript{step} on the LeanDojo benchmark with the Qwen2.5-Math-1.5B model, each run required around 12 NVIDIA H100 GPU-hours. For DCO\textsuperscript{a} on the MATH dataset with CoT, each run consumed about 60 NVIDIA H100 GPU-hours for the Llama-3-8B-base model.

For both DCO and DCO\textsuperscript{step}, the additional computational cost is minimal—limited to computing the DCO factor, which is negligible compared to the cost of forward and backward passes. However, DCO\textsuperscript{a}, when applied to fine-tuning with CoT traces, introduces significant overhead due to the need for online Monte Carlo estimation. In the setting of Llama-3 8B, with 32 layers, hidden dimension 4096, context length 1024, and batch size 128, we estimate 5 petaflops for each forward-backward pass. Generating $N_{MC}=64$ CoT's for each of 128 examples, in turn, requires approximately 50 petaflops. The actual inference compute cost depends on model performance: stronger models, which will discard more samples, will need to process more samples in order to construct a full batch for the next training step. Therefore there is a tradeoff: constructing a batch of 128 \textbf{hard} samples requires running inference on more samples, but also means fewer steps are required to make a full pass through the dataset.

The memory overhead is approximately 30\% of GPU memory, when running one inference engine per pair of H100 80GB GPUs. We note that while the flop overhead is significant, costly online inference is also required in RL frameworks for CoT training. Furthermore, the choice of $N_MC$ can be made smaller, leading to less precise estimates of the model's probability of success on a given problem, while speeding up training.

\pagebreak
\section{Additional results}

\subsection{Easy data drives overconfidence}
\label{subsec:easydata}

\begin{figure}[h]
\begin{center}
\centerline{\includegraphics[width=0.65\columnwidth]{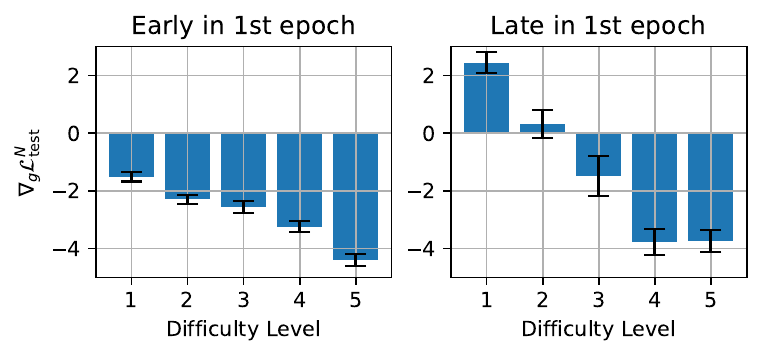}}
\caption{\textbf{Easy data drives overconfidence and degrades performance when scaling test-time compute}. 
For the experiments in \cref{fig:overconfidence}, we compute the directional derivative $\nabla_g\mathcal{L}_{\text{test}}^N$ at two points during the first training epoch: 11\% (early) and 86\% (late). At each stage, the gradient direction $g=-\sum_{(x_i,y_i)\in\mathcal{B}}\nabla\ell_\text{CE}(x_i,y_i)$ is evaluated on batches of \textbf{previously unseen} training data of each difficulty level defined in the MATH dataset. Early in training, data from all difficulty levels contribute to decreasing the test loss (\textit{left}). However, later in training, easier examples (difficulty level 2) provide no further benefit, while the easiest examples (difficulty level 1) actively degrade test performance (\textit{right}). The plotted $\nabla_g\mathcal{L}_{\text{test}}^N$ is an average over batches of unseen data, and we use $N=256$, corresponding to pass@256.}
\label{fig:jvp}
\end{center}
\vspace{-15pt}
\end{figure}

We have shown in~\cref{subsec:explainoverfitting} that overconfidence limits performance gains from scaling test-time compute. Here we investigate whether all training data contribute equally to the drop in pass@N test performance. To quantify this effect, we define the test loss as the negative log-probability of coverage, $\mathcal{L}_{\text{test}}^N = -\sum_{(x,y)\in\mathcal{D}^{\mathrm{test}}}\mathcal \log \ \mathcal{C}_{(x,y)}^N$, where $\mathcal{C}_{(x,y)}^N$ denotes the coverage for a single test example $(x,y)$. 
To measure how a training batch $\mathcal{B}$ influences the final test loss, we compute the directional derivative $\nabla_g\mathcal{L}_{\text{test}}^N$ for the gradient direction $g=-\sum_{(x_i,y_i)\in\mathcal{B}}\nabla\ell_\text{CE}(x_i,y_i)$, where $\ell_\mathrm{CE}$ is the standard CE loss on a single example.
A negative value, $\nabla_g \mathcal{L}_{\text{test}}^N<0$, indicates that a gradient step in the direction of $g$ decreases the test loss. 

To analyze the impact of data difficulty, we group training data by difficulty level (as specified in the MATH dataset) and use the directional derivative to examine how a batch of \textbf{previously unseen} data from a \textbf{given difficulty level} would affect $\mathcal{L}_{\text{test}}^N$ at different stages of training. Early in training, we observe that $\nabla_g\mathcal{L}_{\text{test}}^N$ is negative across all difficulty levels, indicating that data from all difficulty levels contribute positively to reducing the test loss (\cref{fig:jvp}, left). However, near the end of the first epoch, training on easier examples (difficulty levels 1 and 2) no longer improves performance, and the easiest examples (level 1) actively degrade pass@N test performance (\cref{fig:jvp}, right). This shows how continued training on easy data can harm test performance when scaling test-time compute.

\newpage
\subsection{Optimality of DCO is achieved when the training objective aligns with the test objective.}

\begin{figure}[ht]
\begin{center}
\centerline{\includegraphics[width=0.5\columnwidth]{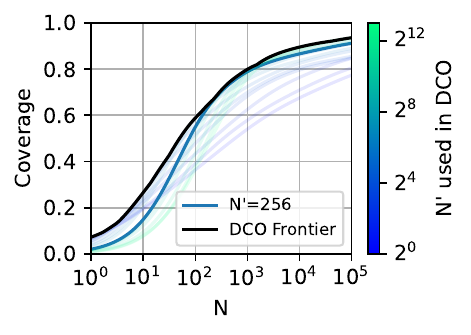}}
\caption{\textbf{Pareto-optimality at a given test strategy pass@N for some $N$ is obtained by DCO training for some $N'$ close to $N$.} Same as~\cref{fig:overconfidence} but with $N'=256$ highlighted. We fine-tune Llama-3-8B base models on the MATH dataset to produce direct answers. We fine-tune for 4 epochs one model using CE loss and several models under the $\mathcal{L}_\text{DCO}^{N'}$ objective, for choices of $N'$ indicated by color. Note that there is no choice of $N'$ that is optimal across all $N$ (different colors are higher at different $N$). The black curve is a Pareto-optimal performance frontier traced by the max of coverage curves for DCO over all $N'$. Fine-tuning with the DCO loss ($N'=256$) achieves Pareto-optimality for $180\le N \le 385$ at test-time, further confirming that Pareto-optimality at a given test strategy pass@N for some $N$ on the x-axis is obtained by DCO training for some $N'$ close to $N$.}
\label{fig:DCOfrontier_256_highlight}
\end{center}
\vspace{-10pt}
\end{figure}

\subsection{Additional results with Llama-3-70B-base}
\label{sec:supp:70B}
In this section, we present additional results using the larger Llama-3-70B-base model, which are consistent with the findings reported for Llama-3-8B-base.

\begin{table}[h]
\caption{Same as~\cref{table:misalignment} but with Llama-3-70B-base model. Pass@N coverage metric on the MATH test set for a Llama-3-70B-base model fine-tuned with CE loss on direct answers from the MATH training set. Surprisingly, Pass@N test accuracy at large $N$ decreases with number of training epochs.}
\label{table:misalignment-70B}
\begin{center}

\begin{tabular}{lcccr}
\toprule
 & pass@1 & pass@16 & pass@256 & pass@4k \\
\midrule
Epoch 1 &5.9\% & \textbf{34.3\%} & \textbf{68.8\%} & \textbf{84.2\%} \\
Epoch 2 &6.9\% & 32.0\% & 63.0\% & 81.5\%\\
Epoch 3 &\textbf{8.2\%} & 24.2\% & 48.8\% & 72.2\%\\
Epoch 4 &7.9\% & 22.2\% & 44.2\% & 65.9\%\\
\bottomrule
\end{tabular}
\end{center}
\vspace{-8pt}
\end{table}

\begin{figure}[H]
\begin{center}
\centerline{\includegraphics[width=0.99\textwidth]{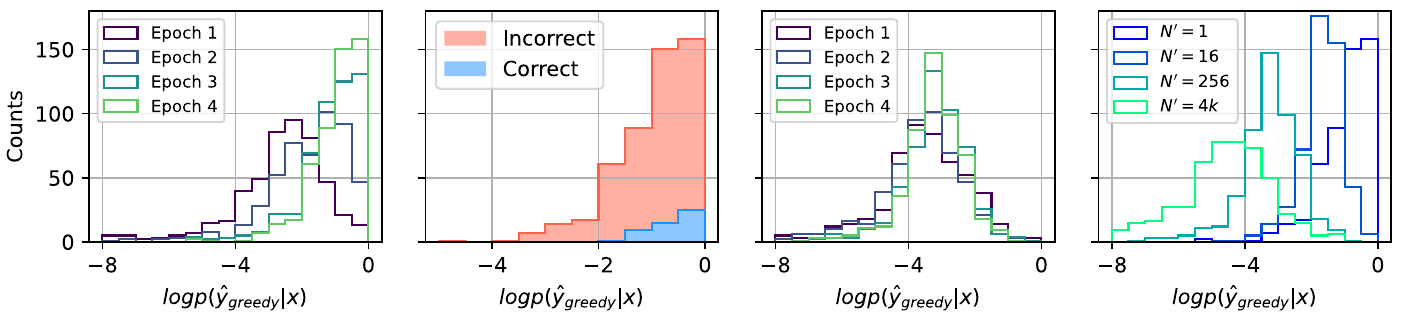}}
\caption{\textbf{Larger-sized model trained with CE loss also becomes overconfident in its greedy completions; our proposed DCO objective limits this overconfidence on larger-sized models.} Same as~\cref{fig:overconfidence} but with Llama-3-70B-base. We fine-tune a Llama-3-70B-base model on the MATH dataset to produce direct answers without a reasoning trace. $\hat{y}_\text{greedy}$ is the model's greedy completion when sampling autoregressively and choosing the most likely token at each step. \textbf{Leftmost:} The model trained with CE loss assigns progressively larger confidences $\hat{p}(\hat{y}_\text{greedy}|x)$ to its greedy completions over the course of training. \textbf{Left:} At the end of the training, only a small portion of the model's highly confident completions are correct. This will harm the model's pass@N performance when scaling up $N$. \textbf{Right:} Same as leftmost but shown for the DCO loss with $N=256$.  Relative to the CE loss, the model trained on DCO shows a much milder overconfidence effect.  \textbf{Rightmost:} The confidence distribution of the greedy completions after four epochs with DCO for various choices of $N$. As $N$ increases, the model's confidence on the greedy completion is more stringently limited, directly as a consequence of the overconfidence regularizer $F$.}
\label{fig:overconfidence_70B}
\end{center}
\vskip -0.6in
\end{figure}

\begin{figure}[H]
\vskip 0.2in
\begin{center}
\centerline{\includegraphics[width=0.55\columnwidth]{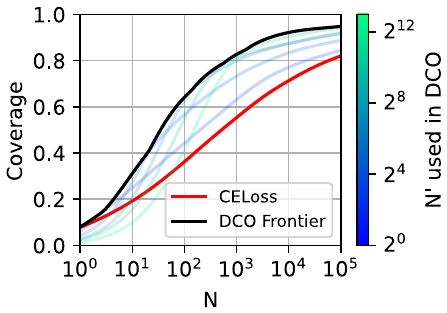}}
\caption{\textbf{DCO improves on CE for pass@N test coverage over a broad range of N and traces a Pareto-optimal frontier with Llama-3-70B-base.} Same as~\cref{fig:2} (a) but with Llama-3-70B-base model. We fine-tune Llama-3-70B base models on the MATH dataset to produce direct answers. We fine-tune for 4 epochs one model using CE loss and several models under the $\mathcal{L}_\text{DCO}^{N'}$ objective, for choices of $N'$ indicated by color. We plot pass@N test coverage as a function of $N$, with each curve (solid red or faint blue-green) corresponding to {\it one} fine-tuned model. Note that there is no choice of $N'$ that is optimal across all $N$ (different colors are higher at different $N$). The black curve is a Pareto-optimal performance frontier traced by the max of coverage curves for DCO over all $N'$. Pareto-optimality at a given test strategy pass@N for some $N$ on the x-axis is obtained by DCO training for some $N'$ close to $N$.}
\label{fig:DCOfrontier_70B.}
\end{center}
\vspace{-10pt}
\end{figure}

\subsection{Out-of-distribution evaluation}
\label{sec:supp:AIME}

In this section, we present additional evaluation results on AIME24. We confirm that overconfidence persists even on the out-of-distribution test set, and that the DCO algorithm saves test-time scaling by regularizing the max confidence.

\begin{table}[h]
\caption{Same as~\cref{table:misalignment} but evaluate on AIME24 dataset. Pass@N coverage metric on the AIME24 dataset for a Llama-3-8B-base model fine-tuned with CE loss on direct answers from the MATH training set. Surprisingly, Pass@N test accuracy at large $N$ decreases with number of training epochs.}
\label{table:misalignment-AIME}
\begin{center}

\begin{tabular}{lcccr}
\toprule
 & pass@1 & pass@16 & pass@256 & pass@4k \\
\midrule
Epoch 1 &0.2\% & 3.4\% & 30.5\% & \textbf{80.5\%} \\
Epoch 2 &0.3\% & 4.1\% & \textbf{31.7}\% & 75.6\%\\
Epoch 3 &0.2\% & 2.9\% & 16.0\% & 42.0\%\\
Epoch 4 &\textbf{1.9}\% & \textbf{4.9}\% & 15.2\% & 35.8\%\\
\bottomrule
\end{tabular}
\end{center}
\vspace{-8pt}
\end{table}

\begin{figure}[H]
\begin{center}
\centerline{\includegraphics[width=0.9\textwidth]{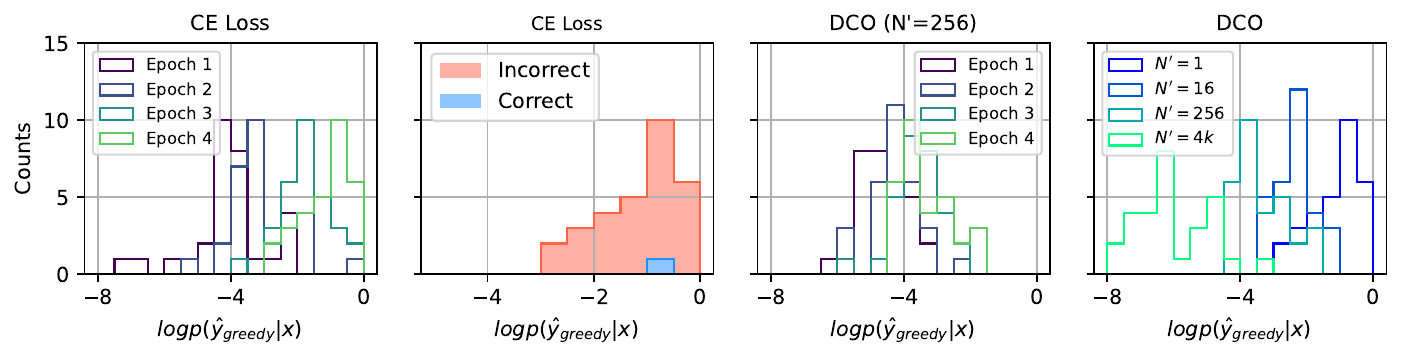}}
\caption{\textbf{A model trained with CE loss becomes overconfident in its greedy completions on out-of-distribution test set; our proposed DCO objective limits this overconfidence on out-of-distribution test set.} Same as~\cref{fig:overconfidence} but evaluated on AIME24. We fine-tune a Llama-3-8B-base model on the MATH dataset to produce direct answers without a reasoning trace and evaluate the model on AIME24. $\hat{y}_\text{greedy}$ is the model's greedy completion when sampling autoregressively and choosing the most likely token at each step. \textbf{Leftmost:} The model trained with CE loss assigns progressively larger confidences $\hat{p}(\hat{y}_\text{greedy}|x)$ to its greedy completions over the course of training. \textbf{Left:} At the end of the training, only a small portion of the model's highly confident completions are correct. This will harm the model's pass@N performance when scaling up $N$. \textbf{Right:} Same as leftmost but shown for the DCO loss with $N=256$.  Relative to the CE loss, the model trained on DCO shows a much milder overconfidence effect.  \textbf{Rightmost:} The confidence distribution of the greedy completions after four epochs with DCO for various choices of $N$. As $N$ increases, the model's confidence on the greedy completion is more stringently limited, directly as a consequence of the overconfidence regularizer $F$.}
\label{fig:overconfidence_AIME}
\end{center}
\vskip -0.4in
\end{figure}

\begin{figure}[H]
\vskip 0.2in
\begin{center}
\centerline{\includegraphics[width=0.45\columnwidth]{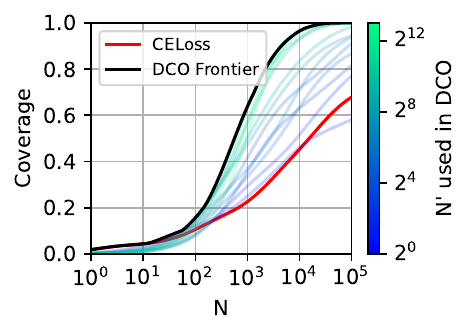}}
\caption{\textbf{DCO improves on CE for pass@N test coverage over a broad range of N and traces a Pareto-optimal frontier on out-of-distribution test set.} Same as~\cref{fig:2} (a) but evaluated on AIME24. We fine-tune Llama-3-8B base models on the MATH dataset to produce direct answers and evaluate on AIME24. We fine-tune for 4 epochs one model using CE loss and several models under the $\mathcal{L}_\text{DCO}^{N'}$ objective, for choices of $N'$ indicated by color. We plot pass@N test coverage as a function of $N$, with each curve (solid red or faint blue-green) corresponding to {\it one} fine-tuned model. Note that there is no choice of $N'$ that is optimal across all $N$ (different colors are higher at different $N$). The black curve is a Pareto-optimal performance frontier traced by the max of coverage curves for DCO over all $N'$. Pareto-optimality at a given test strategy pass@N for some $N$ on the x-axis is obtained by DCO training for some $N'$ close to $N$.}
\label{fig:DCOfrontier_AIME}
\end{center}
\vspace{-10pt}
\end{figure}

\subsection{Comparing with Focal Loss}
\label{sec:focal-loss}
\begin{figure}[htbp]
  \centering
  \includegraphics[width=0.48\textwidth]{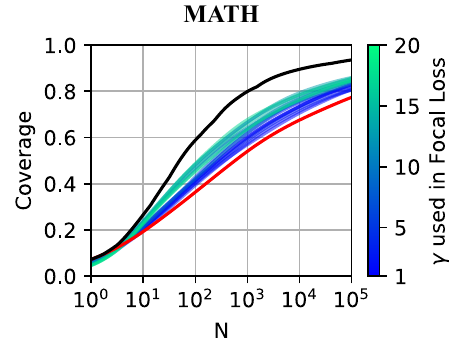}\hfill
  \includegraphics[width=0.48\textwidth]{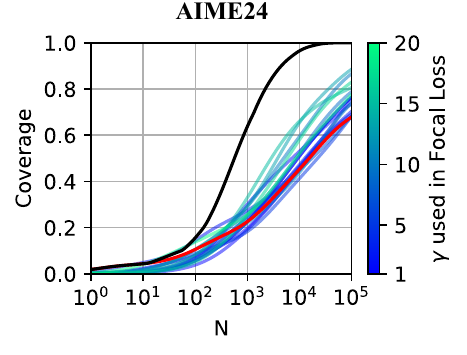}
  \caption{\textbf{DCO outperforms Focal Loss~\cite{Lin_2017_ICCV} for pass@N test coverage over a broad range of $N$ both for in- and out-of-distribution test sets.}
  We fine-tune Llama-3-8B base models on the MATH dataset to produce direct answers, using the Focal Loss~\cite{Lin_2017_ICCV}: $\mathcal{L}^\gamma_{\mathrm{FL}}(x,y) = -(1 - \hat{p}(y\mid x)^\gamma \log \hat{p}(y \mid x)$. We fine-tune several models for 4 epochs under $\mathcal{L}^\gamma_{\mathrm{FL}}$ for choices of $\gamma$ indicated by color. We plot pass@N test coverage as a function of $N$ for MATH (\textbf{left}) and AIME24 (\textbf{right}), with each curve corresponding to {\it one} fine-tuned model. The black curve shows the Pareto-optimal performance frontier traced by the max of coverage curves for DCO over all $N'$. The CE loss curves (red) and the DCO frontier curves (black) are the same as in \cref{fig:2} (a) for MATH and \cref{fig:DCOfrontier_AIME} for AIME24.}
  \label{fig:two-side}
\end{figure}

\subsection{Theorem proving}
In~\cref{app:table:lean-mathlib}, we show additional results for model performance under the DCO\textsuperscript{STEP} objective. We find that the optimal $N_\text{eff}$ grows with increasing passes, agreeing with results in \cref{subsec:dco-overconf,subsec:cot-online}. We also conduct expert iteration~\cite{expert_it,expert_it_formal_math} on Mathlib with theorems that do not have proof traces in the training set. We use pass@1k to prove those theorems. We find that our algorithm achieves a stronger improvement over the baseline for pass@4k after the 1\textsuperscript{st} iteration. This improvement might result from the fact that the models can prove more easy theorems where the model has a higher confidence. As a result, we believe our method will perform better with expert iteration.

\begin{table}[ht]
\caption{Success rate on lean-dojo benchmark random test set trained with DCO\textsuperscript{step}.}
\label{app:table:lean-mathlib}
\begin{center}
\begin{sc}
\setlength{\tabcolsep}{2pt}
\resizebox{\columnwidth}{!}{
\begin{tabular}{lccccccccr}
\toprule
 & \multicolumn{5}{c}{Expert iteration 0} & \multicolumn{3}{c}{Expert iteration 1} \\
 DCO\textsuperscript{step}  & pass@16  & pass@64 & pass@256 & pass@1k & pass@4k & pass@16  &pass@256 & pass@4k \\
\midrule
$N_\mathrm{eff}=1$ (CE) & \textbf{30.0\%} & 38.75\% &46.05                  &50.75\%                & 55.55\%             & 40.3\%  & 52.65\% &58.55\% \\
$N_\mathrm{eff}=4$ & \textbf{30.15}\% &\textbf{39.5\%} & \textbf{47.2\%}    & \textbf{52.95\%}      & 56.35\%             & \textbf{40.8\%}   & \textbf{53.05\%}        & 59.45\% \\
$N_\mathrm{eff}=8$ &  \textbf{30.2}\%  & 38.9\% & \textbf{47.15\%}          & \textbf{52.7\%}       & 56.1\%              & 40.1\%     & \textbf{53.25\%}       &59.5\% \\
$N_\mathrm{eff}=16$ & 28.65\% & 46.7\% & 46.45\%                            & \textbf{52.9\%}       & \textbf{56.5\%}     & 39.05\%     & 52.8\%       & \textbf{60.05\%}\\
$N_\mathrm{eff}=32$ & 26.05\% & 46.7\% & 45.6\%                             & 51.5\%                & 55.8\%              & 37.05\%    & 52.2\%        & 59.15\%\\
Ensemble & 40.6\% & 49.15\% & 54.6\%                                        &  59.0\%               & 62.15\%             & 49.05\%    & 59.3\%        &64.8\%\\
\bottomrule
\end{tabular}}
\end{sc}
\end{center}
\end{table}

\pagebreak

\subsection{Plot of the upper bound and the lower bound}
\begin{figure}[H]
\begin{center}
\centerline{\includegraphics[width=0.55\columnwidth]{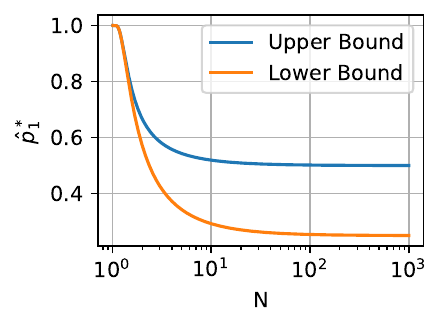}}
\caption{\textbf{Plot of the upper bound (\cref{subsubsection:summaryofproofoflemma42}) and the lower bound (\cref{eq:hatp1lowerbound}).} We plot the upper bound (blue) and lower bound (orange) for $p_{1}=\frac{1}{2}$, $p_{2}=\frac{1}{4}$, $\epsilon=\frac{1}{4}$ and $k=2$. Both the upper bound and the lower decrease monotonically in $N$ and they both tends to $1$ as $N\to 1^{+}$.}
\label{fig:theorem}
\end{center}
\end{figure}

\subsection{Empirical evaluation of the approximately well calibrated assumption}
\begin{figure}[H]
\begin{center}
\centerline{\includegraphics[width=0.8\columnwidth]{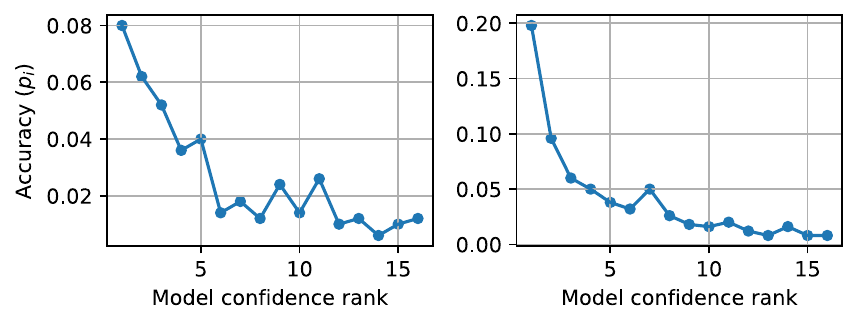}}
\caption{\textbf{We empirically verify the assumption that models are approximately well-calibrated.} \textbf{Left}: We fine-tune a Llama-3-8B-base model with CE loss on the MATH dataset \textbf{without} CoT and perform beam search with a width of 256 to obtain the top 16 most probable completions. We then measure the accuracy of these completions at each confidence rank. \textbf{Right}: We fine-tune a Llama-3-8B-base model with CE loss on the MATH dataset \textbf{with} CoT and perform estimate the top 16 most frequent answers after tracing out the reasoning traces. We then measure the accuracy of these answers at each frequency rank. The results demonstrate that test accuracy decreases approximately monotonically with model confidence rank, supporting the assumption.}
\label{fig:well-calibrated}
\end{center}
\end{figure}

\pagebreak

\subsection{The data dependency of confidence and factor $F$}
\begin{figure}[ht]
\begin{center}
\centerline{\includegraphics[width=0.7\columnwidth]{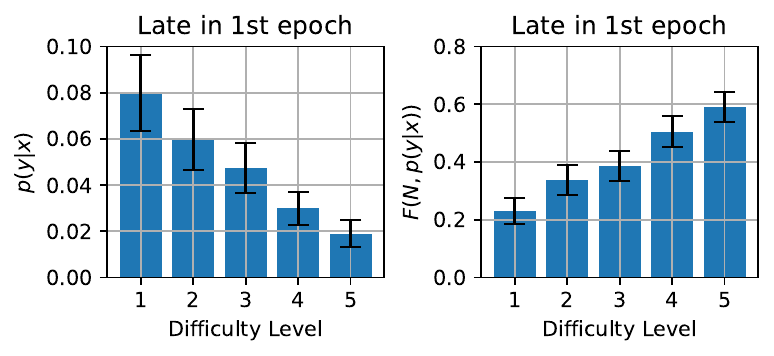}}
\caption{\textbf{Model is more confident on easy problems; DCO improves test-time scaling by regularizing the easy examples.} We fine-tune a Llama-3-8B-base
model on the MATH dataset with CE loss and plot the model confidence $p(y|x)$ and the factor $F(N,p(y|x))$ at 86\% of the first training epoch. Both model confidence and factor are evaluated on the \textbf{unseen} data grouped by difficulty level from the MATH dataset. Model confidence decreases with increasing difficulty, whereas the regularization factor increases with problem difficulty. As a result, DCO effectively regularizes contribution from easy examples. This regularization mitigates the potential detrimental effects of overconfidence from easy examples as discussed in \cref{subsec:easydata}. We use $N=256$ corresponding to pass@256 for the plots. }
\label{fig:data_dependency}
\end{center}
\end{figure}

\subsection{Quantifying the output diversity of models trained with DCO}
\begin{figure}[ht]
\begin{center}
\centerline{\includegraphics[width=0.9\columnwidth]{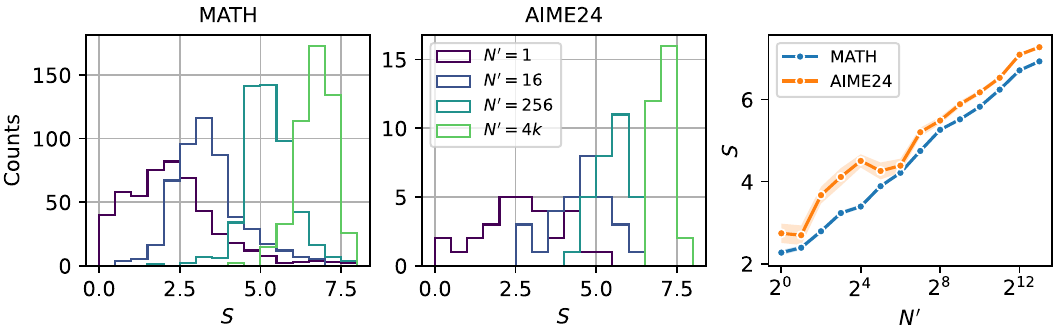}}
\caption{
\textbf{DCO enhances output diversity in fine-tuned models. Increasing $N'$ leads to greater diversity in model completions.} We fine-tune a Llama-3-8B-base model on the MATH dataset to generate direct answers without explicit reasoning steps. Shannon entropy of model completions, conditioned on the input prefix, is estimated using 4096 samples per test example. \textbf{Left} and \textbf{Middle}: Histograms depicting the estimated entropy distributions for the test of MATH (left) and AIME24 (middle) respectively. Higher values of $N'$ shift the entropy distribution to the right, reflecting increased diversity of model outputs. \textbf{Right}: Mean entropy values plotted against $N'$ with standard errors of the mean. The estimated entropy increases with $N'$.
}
\label{fig:entropy}
\end{center}
\vspace{-20pt}
\end{figure}

\end{document}